\documentclass[10pt]{article} 
\usepackage[accepted]{TMLR_styles/tmlr}


\usepackage{amsmath,amsfonts,bm}









\def\eqref#1{equation~\ref{#1}}









\def\1{\bm{1}}










\DeclareMathAlphabet{\mathsfit}{\encodingdefault}{\sfdefault}{m}{sl}
\SetMathAlphabet{\mathsfit}{bold}{\encodingdefault}{\sfdefault}{bx}{n}













\DeclareMathOperator*{\argmax}{arg\,max}

\usepackage{booktabs}       
\usepackage{xcolor}         

\usepackage{hyperref}
\usepackage{url}
\usepackage{multirow}
\usepackage{amsmath}
\usepackage{amssymb}
\usepackage{mathtools}
\mathtoolsset{showonlyrefs}
\usepackage{subfigure}

\usepackage{amsthm}
\usepackage{enumitem}
\usepackage{algorithm}

\let\classAND\AND
\let\AND\relax
\usepackage{algorithmic}

\let\AND\classAND
\AtBeginEnvironment{algorithmic}{\let\AND\algoAND}

\theoremstyle{plain}
\newtheorem{theorem}{Theorem}[section]

\theoremstyle{definition}
\newtheorem{definition}[theorem]{Definition}

\theoremstyle{remark}

\title{State-Constrained Offline Reinforcement Learning}



\author{\name Charles A.~Hepburn$^\mathbf{1,3}$, Yue Jin$^\mathbf{2}$, Giovanni Montana$^\mathbf{2,3,4}$\\ 
\addr $^1$Mathematics Institute, University of Warwick, Coventry, UK \\ 
$^2$Warwick Manufacturing Group, University of Warwick, Coventry, UK \\ $^3$Department of Statistics, University of Warwick, Coventry, UK \\
$^4$Alan Turing Institute, London, UK \\
\email \{charlie.hepburn, yue.jin.3, g.montana\}@warwick.ac.uk}



\begin{document}

\maketitle
\begin{abstract}
Traditional offline reinforcement learning (RL) methods predominantly operate in a batch-constrained setting. This confines the algorithms to a specific state-action distribution present in the dataset, reducing the effects of distributional shift but restricting the policy to seen actions. In this paper, we alleviate this limitation by introducing \emph{state-constrained} offline RL, a novel framework that focuses solely on the dataset's state distribution. This approach allows the policy to take high-quality out-of-distribution actions that lead to in-distribution states, significantly enhancing learning potential.
The proposed setting not only broadens the learning horizon but also improves the ability to combine different trajectories from the dataset effectively, a desirable property inherent in offline RL. Our research is underpinned by theoretical findings that pave the way for subsequent advancements in this area. Additionally, we introduce StaCQ, a deep learning algorithm that achieves state-of-the-art performance on the D4RL benchmark datasets and aligns with our theoretical propositions. StaCQ establishes a strong baseline for forthcoming explorations in this domain.
\end{abstract}

\section{Introduction}

Offline RL aims to derive an optimal policy solely from a fixed dataset of pre-collected experiences, without further interactions with the environment \citep{lange2012batch, levine2020offline}. It prohibits online exploration, meaning that effective policies must be constructed solely from the evidence provided by an unknown, potentially sub-optimal behaviour policy active in the environment. This approach is especially suited for real-world scenarios where executing sub-optimal actions can be dangerous, time-consuming, or costly, yet there is an abundance of prior data. Relevant applications include robotics \citep{singh2020cog, kumar2021workflow, sinha2022s4rl}, long-term healthcare treatment plans \citep{tang2021model, tang2022leveraging, shiranthika2022supervised}, and autonomous driving \citep{shi2021offline, fang2022offline, diehl2023uncertainty}. Despite its practical appeal, offline RL faces a significant distributional shift challenge \citep{kumar2019bear}. This phenomenon arises when attempting to estimate the values of actions not present in the dataset, often manifesting as an overestimation. As a result, out-of-distribution (OOD) actions are perceived as more valuable than they truly are, causing the agent to select sub-optimal actions and leading to the accumulation of errors \citep{fujimoto2019BCQ}.

Currently available offline RL techniques mitigate the issue of distributional shift through two primary strategies. The first constrains the policy, during training, to take actions close to dataset actions \citep{fujimoto2019BCQ, kumar2019bear, wu2019behaviourregularized, siegel2020keep, kostrikov2021offline, zhou2021plas, fujimoto2021td3bc}. The second revolves around conservatively estimating the value of OOD actions \citep{kumar2020cql, yu2021combo, an2021uncertainty}. Both strategies aim to anchor the learning algorithm to the state-action pairs within the dataset, an approach termed batch-constrained \citep{fujimoto2019BCQ}. By doing so, these methods seek to minimise the impact of distributional shift. However, adhering strictly to the state-action distribution can be restrictive when the optimal action is not found in the dataset. Thus, recent techniques seek a balance between exploiting potentially valuable OOD actions and minimising distributional shift repercussions.

Given the constraints of the state-action distribution, a pivotal question arises: \emph{can we confine our methods to the state distribution alone and still counteract distributional shift?}

Constraining exclusively to the dataset’s state distribution could significantly reduce the requisite dataset size. Instead of needing all state-action pairs, the focus shifts to states alone. This approach allows the agent to initiate OOD actions, provided they lead to a known state, a form of safe action exploration which avoids distributional shift. Such flexibility can enhance the capability of offline RL algorithms in trajectory stitching — combining sub-optimal experiences into improved trajectories.  Rather than connecting based on actions from different trajectories, the state-constrained approach facilitates stitching by leveraging adjacent states. This method relies on understanding these proximate states, a concept we label as \emph{reachability}, which refers to the ability to reach a particular state from another state within the dataset. 

\begin{figure}[t]
    \centering\includegraphics[width=0.5\linewidth]{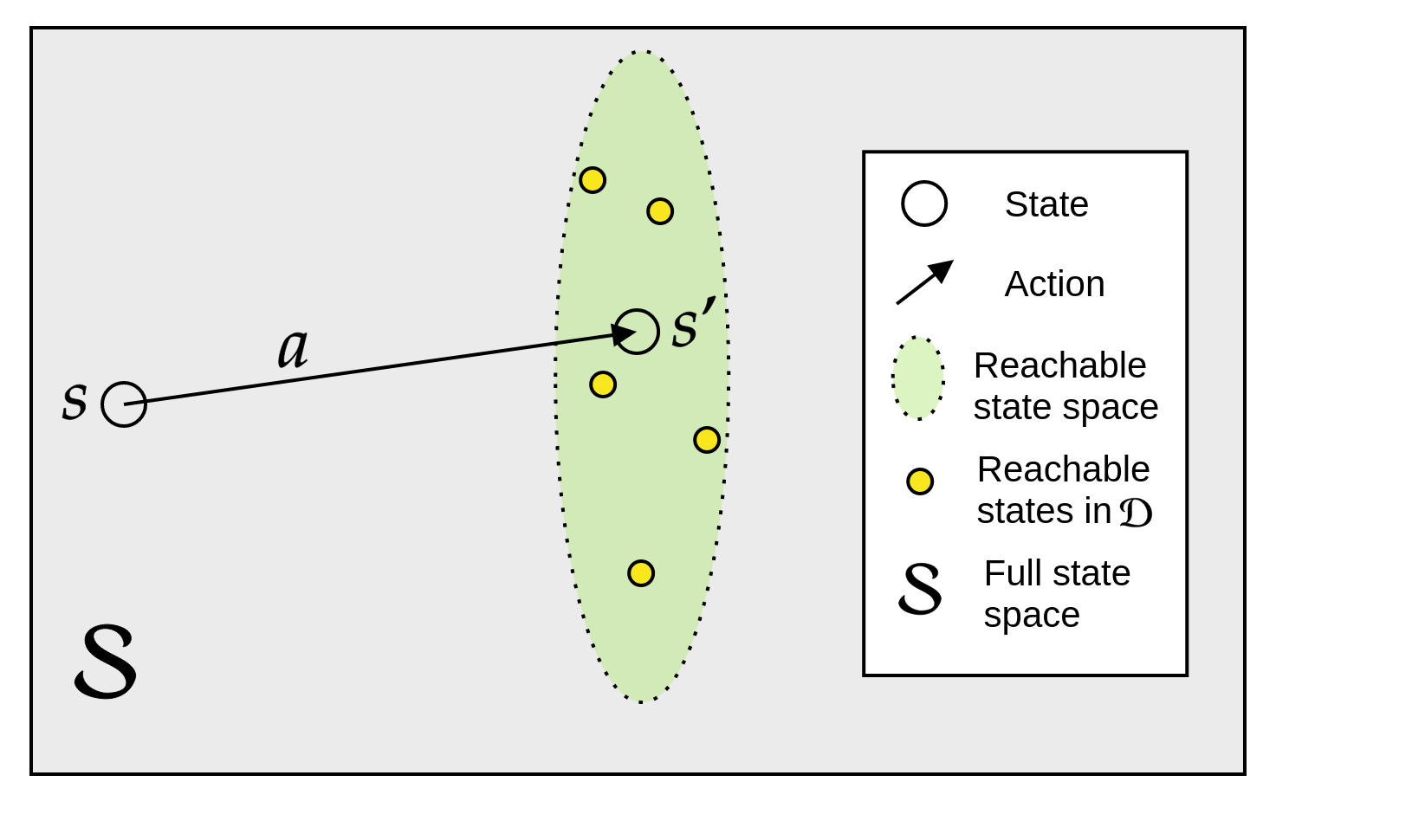}
    \caption{Illustration of state reachability in a continuous state space. States ($s$) and next states ($s'$) from the dataset are shown, with yellow circles representing additional reachable states. The state-constrained method identifies high-quality reachable states to mitigate distributional shift while improving policy performance.}
    \label{fig:Diagram}
\end{figure}

This idea is illustrated in Figure \ref{fig:Diagram}, which shows the benefits of a state-constrained methodology. Most current offline RL methods constrain to the specific transition, $(s,a)$ or $(s,s')$-pair, found in the dataset. Our state-constrained methodology can instead find alternative next states (yellow circles in Figure \ref{fig:Diagram}) and select the highest-value one. This approach effectively increases the number of transitions for the agent to learn from. It avoids distributional shift by updating on in-distribution states and leads to a higher-quality policy, as the policy is no longer restricted to explicitly observed actions.


In the state-constrained approach, understanding reachability is crucial for effectively stitching together sub-optimal trajectories and improving the overall policy. In some environments, reachability is easily delineated; for example, in grid-based environments like mazes, where the agent can move directly between adjacent cells. In other environments, reachability must be ascertained from the dataset, such as in complex robotic systems where state transitions depend on intricate dynamics.

In this work, we make several key contributions. First, building on the concept of reachability, we introduce state-constrained offline RL. Unlike batch-constrained RL, which anchors learning to specific state–action pairs, state-constrained RL focuses on states, allowing more flexible action selection while still mitigating distributional shift. We also provide theoretical convergence guarantees under deterministic assumptions, showing that our framework yields policies whose actions are of higher or equal value compared to batch-constrained approaches. Another main contribution is StaCQ, a novel deep learning algorithm that learns a state-constrained value function and updates the policy to stay close to the highest-quality reachable states. We demonstrate competitive performance on multiple D4RL tasks, surpassing many state-of-the-art methods in locomotion and Antmaze. This positions StaCQ as a robust baseline for future work on state-constrained RL, similarly to how BCQ \citep{fujimoto2019BCQ} has served as a strong baseline for batch-constrained techniques, thereby advancing offline RL research.

\section{Preliminaries}

The RL framework involves an agent interacting with an environment, which is typically represented as a Markov Decision Process (MDP). Such an MDP is defined as $\mathcal{M} = (\mathcal{S}, \mathcal{A}, P, R, \gamma)$, where \(\mathcal{S}\) and \(\mathcal{A}\) denote the state and action spaces, respectively; \(P = p(s'|s, a)\) represents the environment transition dynamics; \(R = r(s,a,s')\) is the reward function for transitioning; and \(0 \leq \gamma < 1\) is the discount factor~\citep{sutton2018reinforcement}. As in \cite{fujimoto2019BCQ}, we focus on the deterministic MDP, where \(p(s'|s, a) = \{0,1\}\) and \(r(s,a,s') = r(s,s')\). In RL, the agent's objective is to identify an optimal policy, \(\pi(s)\), that maximises the future discounted sum of rewards $\sum_{i = t} \gamma^{i-t} r_i(s_i, s'_i)$.  

QSA-values, $Q(s,a) = \mathbb{E}_{\pi}[\sum_{i = 0} \gamma^{t} r_t(s_t, a_t)|s_0 = s, a_0 =a]$, are the expected sum of future discounted rewards for executing action $a$ in state $s$ and there after following policy $\pi$. Q-learning \citep{watkins1992q}, denoted as QSA-learning in this paper, estimates the QSA-values under the assumption that future actions are chosen optimally,
\begin{equation} \label{Eq:QA-Learning}
Q(s, a) \leftarrow (1- \alpha) Q(s, a) + \alpha \Bigl[r(s,a,s') + \gamma \max_{a'} Q(s', a') \Bigr].
\end{equation}
The optimal policy derived from QSA-learning identifies the best action that maximises the QSA-values, =$\pi^*(s) = \argmax_a Q(s, a)$. In deterministic contexts, QSA-learning parallels QSS-learning~\citep{edwards2020estimating}, which estimates the expected rewards upon transitioning from state \(s\) to \(s'\), followed by optimal decisions:
\begin{equation}\label{Eq:QS-Learning}
Q(s, s') \leftarrow (1-\alpha)Q(s, s') + \alpha \Bigl[r(s,s') + \gamma \max_{s''} Q(s', s'') \Bigr].
\end{equation}
QSS-learning enables QSS-values to be learned for transitioning, without evaluating actions. The optimal policy via QSS-learning discerns the most advantageous subsequent state to maximise QSS-values, $\pi_s^*(s) = \argmax_{s'} Q(s, s')$. An action can then be retrieved from an inverse dynamics model as
$a = I(s, \pi_s^*(s))$, a strategy originally designed to address QSA-learning's challenges in redundant action spaces, i.e. where multiple actions lead to the same next state. The methods presented in this paper build upon QSS-learning, using it to avoid evaluating OOD actions between reachable state pairs.

In offline RL, the agent aims to discover an optimal policy; however, it must solely rely on a fixed dataset without further interactions with the environment \citep{lange2012batch,levine2020offline}. The dataset comprises trajectories made up of transitions consisting of the current state, action, next state, and the reward for transitioning, where actions are chosen based on an unknown behaviour policy, \(\pi_{\beta}\). Batch-Constrained QSA-learning (BCQL) \citep{fujimoto2019BCQ} adapts QSA-learning for the offline setting by restricting the optimisation to state-action pairs present in the dataset:
\begin{equation} \label{Eq:BCQ}
Q(s, a) \leftarrow (1-\alpha) Q(s, a) + \alpha \Bigl[r(s,a,s') +
\gamma \max_{a' \text{s.t.} (s',a') \in \mathcal{D}} Q(s', a')\Bigr].
\end{equation}

This approach is restrictive since convergence to the optimal value function in all states is only ensured if every optimal $(s,a)$-pair from the MDP resides within the dataset. Such a limitation aims to sidestep extrapolation errors resulting from distributional shifts. Nonetheless, this framework places a significant constraint on any learning algorithm. Subsequent sections relax this constraint: instead of adhering strictly to state-action pairs (batch-constrained), the learning process is only bound by states (state-constrained). In the following section, we introduce the state-constrained framework and provide a new algorithm called state-constrained QSS-learning (SCQL). SCQL is based from QSS-learning which is useful to evaluate state and reachable next state pairs, where both states exist in the dataset, avoiding the distributional shift issue in offline RL. We show that, under minor assumptions, SCQL converges to the optimal QSS-value and produces a less-restrictive policy than BCQL.

\section{State-constrained QSS-learning}

In this section, we provide a formal introduction to state-constrained QSS-learning and establish its convergence to the optimal QSS-value under a set of minimal assumptions. Within this framework, the learning updates are restricted exclusively to the states present in the dataset. A rigorous definition of state reachability is indispensable for this setting:

\begin{definition}\label{def:statereach}
\textbf{(State reachability)}  In a deterministic MDP $\mathcal{M}$, a state $s'$ is considered reachable from state $s$ if and only if there exists an action $a$ such that $p(s'|s,a) = 1$. We denote the set of states reachable from $s$ as $\mathcal{SR}_{\mathcal{M}}(s)$, where $s' \in \mathcal{SR}_{\mathcal{M}}(s)$.
\end{definition}


Definition \ref{def:statereach} implies that a state is reachable if there exists an action that, when executed in the environment, leads to that state. This definition allows multiple reachable next states to be evaluated rather than a single state-action pair. State-constrained QSS-learning enables more flexible learning updates by evaluating multiple reachable next states, as shown in Figure \ref{fig:Diagram} where the agent can now learn from all reachable states rather than the single explicit next state. This flexibility leads to more robust policies that can better generalise beyond the specific transitions seen in the dataset.




\subsection{Theoretical foundations}

In this section, we initially adapt the theorems presented for BCQL \citep{fujimoto2019BCQ} to suit the state-constrained context (Theorems \ref{Thm:2} - \ref{Thm:4}). All our subsequent theorems are proposed based on learning an optimal QSS-value. However, it is important to note that these theorems still hold if QSA-learning is used instead, with an inverse model defined to evaluate actions between states and reachable next states.


Our theory operates under the following assumptions: (A1) the environment is deterministic; (A2) the rewards are bounded such that $\forall (s, s'), |r(s, s')| \leq c$; (A3) the QSS-values, $Q(s, s')$, are initialised to finite values; and (A4) the discount factor is set such that $0 \leq \gamma < 1$.



First, we show that under these assumptions, QSS-learning converges to the optimal QSS-value.

\begin{theorem} \label{Thm:QSConverge}
Under assumptions A1-4, and with the training rule given by
\begin{equation}\label{eq:QSSupdate}
    Q(s,s') \leftarrow r(s,s') + \gamma \max_{s'' \text{ s.t. } s'' \in \mathcal{SR}_{\mathcal{M}}(s')} Q(s',s''),
\end{equation}
assuming each \((s,s')\) pair is visited infinitely often, let \(Q_{n}(s,s')\) be the value from the \(n\)th update of Eq. \eqref{eq:QSSupdate}, then \(Q_{n}(s,s')\) converges to the optimal QSS-value, \(Q^*(s,s')\), as \(n \rightarrow \infty\) for all \(s,s'\).
\end{theorem}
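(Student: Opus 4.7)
The plan is to recognise the update rule in \eqref{eq:QSSupdate} as a fixed-point iteration for a Bellman-type operator on QSS-values, and then invoke the classical contraction-mapping argument that underpins both value iteration and the deterministic form of QSA-learning. Concretely, I would define an operator $\mathcal{T}$ acting on bounded functions $Q:\mathcal{S}\times\mathcal{S}\to\mathbb{R}$ by
\[ (\mathcal{T}Q)(s,s') = r(s,s') + \gamma \max_{s'' \in \mathcal{SR}_{\mathcal{M}}(s')} Q(s',s''). \]
Assumption A2 bounds the rewards and A3 ensures the initial iterate is finite, so all iterates lie in the Banach space of bounded real-valued functions on $\mathcal{S}\times\mathcal{S}$ equipped with the sup-norm $\|\cdot\|_\infty$.

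The first step is to establish that $\mathcal{T}$ is a $\gamma$-contraction in this norm. Using the elementary inequality $|\max_{x} f(x) - \max_{x} g(x)| \leq \max_{x} |f(x) - g(x)|$ applied to the maximisation over $\mathcal{SR}_{\mathcal{M}}(s')$, one obtains $\|\mathcal{T}Q_1 - \mathcal{T}Q_2\|_\infty \leq \gamma \|Q_1 - Q_2\|_\infty$. Because $0 \leq \gamma < 1$ by A4, the Banach fixed-point theorem then guarantees a unique fixed point $Q^*$. The second step is to confirm that $Q^*$ coincides with the optimal QSS-value $Q^*(s,s')$ as defined by \citet{edwards2020estimating}. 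This is where A1 (determinism) is essential: in a deterministic MDP each action $a$ at $s$ produces exactly one reachable next state, so maximisation over $\mathcal{SR}_{\mathcal{M}}(s')$ is equivalent to the maximisation over admissible actions in the Bellman optimality equation for QSA, identifying $\mathcal{T}$ with the standard Bellman optimality operator.

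The main obstacle is that the stated iteration is \emph{asynchronous}: at each step only the pair $(s,s')$ visited along the trajectory is refreshed, rather than every pair simultaneously as in synchronous value iteration. To close this gap I would invoke the standard asynchronous fixed-point convergence result for contraction operators on a Banach space (in the spirit of Bertsekas and Tsitsiklis, 1996): given a $\gamma$-contraction with a unique fixed point, together with the guarantee that every coordinate is updated infinitely often, the asynchronous iterates converge to that fixed point. Because the update in \eqref{eq:QSSupdate} carries no learning rate and no stochasticity, the full Robbins--Monro machinery used in the original Watkins and Dayan (1992) proof for stochastic Q-learning can be bypassed, and the "infinitely often" hypothesis of the theorem suffices. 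Combining the contraction property, uniqueness of the fixed point, and the asynchronous update result then yields $Q_n(s,s') \to Q^*(s,s')$ as $n\to\infty$ for every pair $(s,s')$.
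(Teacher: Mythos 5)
Your proposal is correct and follows essentially the same route as the paper: the paper's proof is precisely the elementary version of your argument, establishing the $\gamma$-contraction of the QSS update in the sup-norm via the inequality $|\max_x f_1(x) - \max_x f_2(x)| \le \max_x |f_1(x)-f_2(x)|$ and then using the infinitely-often visitation to show the maximum error $\Delta_n$ shrinks by a factor of $\gamma$ over each interval in which every $(s,s')$ pair is updated at least once. Your packaging of that last step as an appeal to the general asynchronous convergence theorem for contractions, and your explicit identification of the fixed point with the optimal QSS-value via determinism, are just more formal renderings of what the paper does directly.
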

\begin{proof}
This follows from the convergence of QSA-learning in a deterministic MDP \citep{Mitchell1997machinelearning}. For brevity and clarity, the full proof is given in the Appendix.
\end{proof}

The convergence of QSS-learning in a deterministic MDP, as shown in Theorem \ref{Thm:QSConverge}, is crucial for establishing the convergence of SCQL. Since SCQL is based on QSS-learning and operates in a deterministic MDP, Theorem \ref{Thm:QSConverge} provides the foundation for proving the convergence and optimality of SCQL under certain assumptions. We now demonstrate that learning the value function from the dataset $\mathcal{D}$ is equivalent to determining the value function of an associated MDP, denoted as $\mathcal{M}_{\mathcal{S}}$. Intuitively, $\mathcal{M}_{\mathcal{S}}$ is the MDP where all transitions are possible between reachable states found in $\mathcal{D}$.

\begin{definition}\label{def:scMDP}
\textbf{(State-constrained MDP)} Let the state-constrained MDP \(\mathcal{M}_{\mathcal{S}} = (\mathcal{S}, \mathcal{A}, \mathcal{P}_{\mathcal{S}}, \mathcal{R}, \gamma)\). Here, both \(\mathcal{S}\) and \(\mathcal{A}\) remain identical to those in the original MDP, \(\mathcal{M}\) and $s_{\text{terminal}}$ is an additional terminating state. The transition probability is given by: $a = I(s,s') \in \mathcal{A}$
\begin{equation} \label{Eq:state-constain_transitionprob}
    p_{\mathcal{S}}(s'|s, a) = \begin{cases}  
    1 & \text{if } (s,s' \in \mathcal{D} \text{ and } s' \in \mathcal{SR}_{\mathcal{M}}(s)) 
     \text{ or } (s \notin \mathcal{D} \text{ and } s' = s_{\text{terminal}}) \\ 
    0 & \text{otherwise.}
    \end{cases}
\end{equation}
The reward function and discount factor remain the same as the original MDP. Except for the terminal state where $r(s,s_{\text{terminal}})$, is set to the initialised value of $Q(s,s')$.
\end{definition}

The transition probability \(p_{\mathcal{S}}(s'|s, a)\) in the state-constrained MDP is defined such that transitions are possible only between states that are reachable from one to the other according to the state reachability definition. This ensures that the state-constrained MDP captures the essential dynamics of the original MDP while focusing on the states present in the dataset. The reward \(r(s, s')\) is assigned as the original reward defined in \(\mathcal{M}\). For the case where \(s\) is absent from the dataset, the rewards are set to the initialised values \(Q(s, s')\). Importantly, the \(s\) and \(s'\) in Definition \ref{def:scMDP} both exist in the dataset but may not exist as a pair $(s,s')$; this means that more transitions exist under this definition than in the batch-constrained MDP defined in \cite{fujimoto2019BCQ}.

\begin{theorem}\label{Thm:2}
By sampling $s$ from \(\mathcal{D}\), sampling $s'$ from $\mathcal{SR}_{\mathcal{M}}(s)$ and performing QSS-learning on all reachable state-next state pairs, QSS-learning converges to the optimal value function of the state-constrained MDP \(\mathcal{M}_{\mathcal{S}}\).
\end{theorem}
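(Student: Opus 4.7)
The plan is to reduce the statement directly to Theorem~\ref{Thm:QSConverge} by recognising that the prescribed sampling-and-update procedure is exactly tabular QSS-learning performed on the deterministic MDP $\mathcal{M}_{\mathcal{S}}$ of Definition~\ref{def:scMDP}. First I would verify that $\mathcal{M}_{\mathcal{S}}$ inherits assumptions A1--A4: determinism is immediate from Eq.~\eqref{Eq:state-constain_transitionprob}, since every non-zero transition probability equals one; the reward bound $|r|\leq c$ carries over on dataset transitions and, for the terminal transitions, is enforced by the finiteness of the initial $Q$-values guaranteed by A3 (which we may take without loss of generality to be bounded by $c$); the initial $Q$-values are finite by A3; and the discount factor is unchanged. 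Hence the hypotheses of Theorem~\ref{Thm:QSConverge} are available for $\mathcal{M}_{\mathcal{S}}$.

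Second, I would show that the prescribed update rule coincides with the Bellman backup of $\mathcal{M}_{\mathcal{S}}$. By construction, the states reachable in $\mathcal{M}_{\mathcal{S}}$ from $s\in\mathcal{D}$ are precisely $\{s'\in\mathcal{D} : s'\in\mathcal{SR}_{\mathcal{M}}(s)\}$, while the only state reachable from $s\notin\mathcal{D}$ is $s_{\text{terminal}}$. Thus sampling $s\sim\mathcal{D}$ and $s'\in\mathcal{SR}_{\mathcal{M}}(s)$ iterates over exactly those transitions of $\mathcal{M}_{\mathcal{S}}$ whose origin lies in the dataset, and the $\max$ in Eq.~\eqref{eq:QSSupdate} over $s''\in\mathcal{SR}_{\mathcal{M}}(s')$ is the Bellman backup in $\mathcal{M}_{\mathcal{S}}$. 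The entries $Q(s,s_{\text{terminal}})$ for $s\notin\mathcal{D}$ are fixed points by the reward assignment in Definition~\ref{def:scMDP}, so they remain at their initial values and never interfere with the updates on dataset pairs.

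Third, assuming that the sampling scheme visits every reachable pair of $\mathcal{M}_{\mathcal{S}}$ infinitely often---the natural analogue of the visitation hypothesis in Theorem~\ref{Thm:QSConverge}---applying Theorem~\ref{Thm:QSConverge} to $\mathcal{M}_{\mathcal{S}}$ yields convergence of $Q_n(s,s')$ to $Q^{*}_{\mathcal{M}_{\mathcal{S}}}(s,s')$ for every pair with positive transition probability in $\mathcal{M}_{\mathcal{S}}$, which is the desired conclusion.

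The hard part will be the bookkeeping around the auxiliary terminal state and the encoding of transitions through the inverse model $a=I(s,s')$ in Definition~\ref{def:scMDP}. One must confirm that the reachability relation induced by $p_{\mathcal{S}}$ coincides with $\mathcal{SR}_{\mathcal{M}}$ restricted to the dataset, and that the terminal construction leaves the optimal value function of $\mathcal{M}_{\mathcal{S}}$ on dataset states independent of the initialisation. Both follow from Definition~\ref{def:scMDP} but require explicit verification to make the reduction to Theorem~\ref{Thm:QSConverge} fully rigorous.
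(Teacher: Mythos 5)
Your proposal is correct and follows essentially the same route as the paper: both reduce the claim to Theorem~\ref{Thm:QSConverge} by identifying the sampling-and-update procedure with tabular QSS-learning on the deterministic state-constrained MDP $\mathcal{M}_{\mathcal{S}}$, checking that assumptions A1--A4 carry over, and observing that pairs outside the dataset are never updated and remain at their initialised values, which by construction of the terminal rewards in Definition~\ref{def:scMDP} makes sampling from $\mathcal{D}$ equivalent to satisfying the visitation hypothesis on $\mathcal{M}_{\mathcal{S}}$. Your version is somewhat more explicit about verifying the Bellman-backup correspondence and the fixed-point nature of the terminal entries, but the argument is the same.
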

\begin{proof}
Given in Appendix \ref{Sect:AppendixProofs}.
\end{proof}

Theorem \ref{Thm:2} establishes that performing QSS-learning on all reachable state-next state pairs from the dataset converges to the optimal value function of the state-constrained MDP. This result is crucial for understanding the convergence and optimality properties of SCQL, as it shows that QSS-learning effectively learns the optimal value function of the state-constrained MDP, which is closely related to the original MDP. 

We are now ready to define the  state-constrained QSS-learning (SCQL) update which is similar to the BCQL formulation except now the maximisation is constrained to the states rather than state-action pairs in the dataset. This formulation allows the maximisation to be taken over more values composing more accurate Q-values while still staying close to the dataset:

\begin{equation}\label{Eq:SCQL}
    Q(s,s') \leftarrow  (1-\alpha) Q(s,s') + \alpha \Bigl[ r(s,s') +
    \gamma \max_{\substack{s'' s.t. s'' \in \mathcal{D}\\ \cap \\ s'' \in \mathcal{SR}_{\mathcal{M}}(s')}} Q(s',s'') \Bigr].
\end{equation}

SCQL, Eq. \eqref{Eq:SCQL}, converges under the identical conditions as traditional QSS-learning, primarily because the state-constrained setting is non-limiting whenever every state in the MDP is observed.



\begin{theorem}\label{Thm:3}
    Under assumptions A1-4 and assuming every state s is encountered infinitely, let $Q_n(s,s')$ be the value from the $n$th update of Eq.\eqref{Eq:SCQL}, the update rule of SCQL,  then $Q_n(s,s')$ converges to the optimal QSS-value $Q^*(s,s')$, as $n \rightarrow \infty$ for all $(s,s')$.
\end{theorem}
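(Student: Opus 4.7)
The plan is to combine Theorem \ref{Thm:2} with a simple identification of the state-constrained MDP $\mathcal{M}_{\mathcal{S}}$ with the original MDP $\mathcal{M}$ under the hypothesis that every state is encountered infinitely often. The argument decomposes cleanly into three steps and reuses all of the heavy machinery already developed earlier in the section.

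First, I would observe that the SCQL update in Eq.~\eqref{Eq:SCQL} is precisely the QSS-learning update applied within the dataset to pairs $(s,s')$ with $s,s' \in \mathcal{D}$ and $s' \in \mathcal{SR}_{\mathcal{M}}(s)$, exactly the setting covered by Theorem \ref{Thm:2}. Hence Theorem \ref{Thm:2} directly implies that $Q_n(s,s')$ converges to $Q^*_{\mathcal{M}_{\mathcal{S}}}(s,s')$, the optimal QSS-value of the state-constrained MDP, provided each such pair is updated sufficiently often, which is the content of the infinite-visitation hypothesis.

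Second, I would argue that under the added assumption that every state is encountered infinitely often we have $\mathcal{S} \subseteq \mathcal{D}$. Inspecting Definition \ref{def:scMDP}, the ``$s \notin \mathcal{D}$'' branch that diverts mass to $s_{\text{terminal}}$ never activates, so $p_{\mathcal{S}}(s' \mid s, I(s,s'))$ agrees with $p(s' \mid s, I(s,s'))$ for every pair with $s' \in \mathcal{SR}_{\mathcal{M}}(s)$. Since the reward function and discount factor are inherited unchanged from $\mathcal{M}$ on these transitions, the Bellman optimality operator for QSS-values in $\mathcal{M}_{\mathcal{S}}$ coincides with that of $\mathcal{M}$. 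By assumption A4 this operator is a $\gamma$-contraction in the supremum norm with a unique fixed point, so $Q^*_{\mathcal{M}_{\mathcal{S}}}(s,s') = Q^*(s,s')$ for every reachable pair.

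Chaining the two equalities yields $Q_n(s,s') \to Q^*(s,s')$ for all $(s,s')$, which is the desired conclusion. I do not expect a genuine obstacle here; the main point requiring care is confirming that the constrained maximisation in Eq.~\eqref{Eq:SCQL} over $s'' \in \mathcal{D} \cap \mathcal{SR}_{\mathcal{M}}(s')$ reduces to the unconstrained maximisation over $s'' \in \mathcal{SR}_{\mathcal{M}}(s')$ used in Theorem \ref{Thm:QSConverge}, which is immediate once $\mathcal{S} \subseteq \mathcal{D}$ is established from the infinite-visitation assumption. No new stochastic approximation argument is required beyond what Theorems \ref{Thm:QSConverge} and \ref{Thm:2} already supply.
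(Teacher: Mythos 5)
Your proposal is correct and ultimately rests on the same observation as the paper's own (very brief) proof: once every state is encountered infinitely often the dataset contains all states, the constraint $s'' \in \mathcal{D}$ in Eq.~\eqref{Eq:SCQL} becomes vacuous, and the update reduces to the unconstrained QSS-learning rule covered by Theorem~\ref{Thm:QSConverge}. Your detour through Theorem~\ref{Thm:2} and the identification $Q^*_{\mathcal{M}_{\mathcal{S}}} = Q^*$ is a slightly longer but equivalent way of packaging that same reduction.
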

\begin{proof}
This follows from Theorem \ref{Thm:QSConverge}, noting the state-constraint is non-restrictive with a dataset which contains all possible states.
\end{proof}
Theorem \ref{Thm:3} is a reduction in the restriction compared to BCQL as now we only require every state to be encountered infinitely rather than every $(s,a)$ -pair.

The optimal policy for our state-constrained approach can be formulated as: 

\begin{equation}\label{Eq:Optimal policy}
    \pi_s^*(s) = \argmax_{\substack{s' ~ \mathrm{ s.t. } s' \in \mathcal{D}\\ \cap\\ s' \in \mathcal{SR}_{\mathcal{M}}(s)}} Q^*(s,s').
\end{equation}
Here, the maximisation is taken over next states from the dataset and that are reachable from the current state.
We now demonstrate that Eq. \eqref{Eq:Optimal policy} represents the optimal state-constrained policy.



\begin{theorem}\label{Thm:4}
    Under assumptions A1-4 and assuming every state s is encountered infinitely, let $Q_n(s,s')$ be the value from the $n$th update of Eq.\eqref{Eq:SCQL}, the update rule of SCQL,  then $Q_n(s,s')$ converges to $Q^{\pi}_{\mathcal{S}}(s,s')$, the optimal QSS-value computed from states from $\mathcal{D}$, with the optimal state-constrained policy defined by Eq. \eqref{Eq:Optimal policy} where $s \in \mathcal{D}$ and $s' \in \mathcal{SR}_{\mathcal{M}}(s) \cap \mathcal{D}$.
\end{theorem}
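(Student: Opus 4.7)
The plan is to combine the convergence result of Theorem~\ref{Thm:3} with a standard greedy-policy optimality argument carried out inside the state-constrained MDP $\mathcal{M}_\mathcal{S}$. First I would invoke Theorem~\ref{Thm:3} to conclude that, under A1--4 and infinite visitation of every state in $\mathcal{D}$, the SCQL iterates satisfy $Q_n(s,s') \to Q^*(s,s')$ as $n \to \infty$ for every pair with $s \in \mathcal{D}$ and $s' \in \mathcal{SR}_{\mathcal{M}}(s) \cap \mathcal{D}$. Then, via Theorem~\ref{Thm:2}, I would identify this limit $Q^*$ with the optimal value function of $\mathcal{M}_\mathcal{S}$, so that in particular $Q^*$ satisfies the Bellman optimality equation
\[
Q^*(s,s') \;=\; r(s,s') + \gamma \max_{s'' \in \mathcal{SR}_{\mathcal{M}}(s') \cap \mathcal{D}} Q^*(s',s''),
\]
which is exactly the fixed-point form of the update in Eq.~\eqref{Eq:SCQL}.

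Next, I would observe that the policy $\pi_s^*$ defined by Eq.~\eqref{Eq:Optimal policy} is, by construction, the greedy policy with respect to $Q^*$ over the transitions admissible under $p_\mathcal{S}$ from Definition~\ref{def:scMDP}. A standard greedy-policy-optimality argument applied to $\mathcal{M}_\mathcal{S}$ (policy improvement followed by policy evaluation) then yields $Q^{\pi_s^*}_{\mathcal{S}} = Q^*$. Chaining this equality with the convergence of the first paragraph gives $Q_n(s,s') \to Q^*(s,s') = Q^{\pi_s^*}_{\mathcal{S}}(s,s')$, which is the statement of the theorem.

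The main obstacle I anticipate is verifying that the greedy policy $\pi_s^*$ is genuinely admissible and that unrolling it stays inside $\mathcal{D}$, so that no spurious contribution comes from the artificial terminating transitions to $s_{\text{terminal}}$ introduced in Definition~\ref{def:scMDP}. Assumption~A1 (determinism) and the explicit restriction of the argmax in Eq.~\eqref{Eq:Optimal policy} to $\mathcal{SR}_{\mathcal{M}}(s) \cap \mathcal{D}$ together ensure that successive next-states chosen by $\pi_s^*$ remain in the dataset, so the Bellman backup for $\pi_s^*$ in $\mathcal{M}_\mathcal{S}$ reproduces exactly the SCQL update, and the policy improvement step collapses to equality. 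Once this is handled cleanly, the remainder is essentially bookkeeping on top of Theorems~\ref{Thm:2} and~\ref{Thm:3}.
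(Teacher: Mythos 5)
Your proposal is correct and follows essentially the same route as the paper's proof: both identify the SCQL limit with the optimal value function of the state-constrained MDP $\mathcal{M}_{\mathcal{S}}$ via Theorem~\ref{Thm:2} and then conclude that the greedy policy of Eq.~\eqref{Eq:Optimal policy} is the optimal state-constrained policy. You are somewhat more explicit than the paper about the greedy-policy improvement/evaluation step and about checking that unrolling $\pi_s^*$ stays in $\mathcal{D}$ and never triggers the artificial $s_{\text{terminal}}$ transitions, which are details the paper's terse argument leaves implicit.
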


\begin{proof}
Given in Appendix  \ref{Sect:AppendixProofs}.
\end{proof} 

Theorem \ref{Thm:4} has important practical implications for the performance of SCQL in real-world scenarios with limited datasets. It suggests that SCQL can learn an optimal state-constrained policy even when the dataset does not contain all possible state-action pairs, as long as every state is visited infinitely often. The state-constrained approach greatly reduces the limitations placed on the learning algorithm compared to the batch-constrained method. In the batch-constrained approach, the necessity for every state-action pair to be present in the dataset demands a dataset of size at least $|\mathcal{S}| \times |\mathcal{A}|$. On the other hand, the state-constrained method only mandates that each state be visited, which minimally requires a dataset size of $|\mathcal{S}|$. 

We will now show that BCQL is a special case of SCQL, and that policies produced by SCQL will never be worse than policies produced by BCQL.

\begin{theorem}\label{Thm:SCQL>=BCQL}
    Let $\pi_{\text{BCQL}}$ and $\pi_{\text{SCQL}}$ be the policies produced by BCQL and SCQL respectively. Then, in a deterministic MDP, $V_{\pi_{\text{SCQL}}}(s) \geq V_{\pi_{\text{BCQL}}}(s)$,  $\forall s \in \mathcal{D}$.
\end{theorem}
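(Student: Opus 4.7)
The plan is to exhibit BCQL as performing its Bellman maximisation over a sub-collection of the next states that SCQL considers, and then transfer this containment to the value functions via pointwise monotonicity of the Bellman backup. Since assumption A1 makes the dynamics deterministic, every dataset transition $(s,a,s')$ is uniquely described by the state pair $(s,s')$, and the inverse dynamics model $I(s,s')$ from the preliminaries lets me rewrite the BCQL update Eq.~\eqref{Eq:BCQ} as a QSS backup. After this translation both algorithms act on the same $Q(s,s')$ table and differ only in the admissible set over which the inner max is taken.

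The key combinatorial step is the set inclusion
\[
\mathcal{N}_{\text{BCQL}}(s) := \{s' : \exists a,\ (s,a,s')\in\mathcal{D}\} \ \subseteq\ \{s' : s' \in \mathcal{D} \cap \mathcal{SR}_{\mathcal{M}}(s)\} =: \mathcal{N}_{\text{SCQL}}(s),
\]
valid for every $s\in\mathcal{D}$: every recorded successor of $s$ is automatically a dataset state reachable from $s$, while SCQL is additionally permitted to consider dataset states that only appear as successors of other source states. This inclusion yields the pointwise operator dominance $\mathcal{T}_{\text{SCQL}} Q \ge \mathcal{T}_{\text{BCQL}} Q$ for every $Q$, since a maximum over a larger set is never smaller. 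Starting both value iterations from the same initialisation and using monotonicity of the Bellman backup, a routine induction gives $\mathcal{T}_{\text{SCQL}}^{n} Q_0 \ge \mathcal{T}_{\text{BCQL}}^{n} Q_0$ for all $n$, and Theorems~\ref{Thm:3} and~\ref{Thm:4} guarantee convergence to the respective fixed points, so $Q^{*}_{\text{SCQL}}(s,s') \ge Q^{*}_{\text{BCQL}}(s,s')$ in the limit.

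To conclude, I would extract the greedy policies via Eq.~\eqref{Eq:Optimal policy} and chain two inequalities,
\[
V_{\pi_{\text{SCQL}}}(s) = \max_{s' \in \mathcal{N}_{\text{SCQL}}(s)} Q^{*}_{\text{SCQL}}(s,s') \ \ge\ \max_{s' \in \mathcal{N}_{\text{BCQL}}(s)} Q^{*}_{\text{SCQL}}(s,s') \ \ge\ \max_{s' \in \mathcal{N}_{\text{BCQL}}(s)} Q^{*}_{\text{BCQL}}(s,s') = V_{\pi_{\text{BCQL}}}(s),
\]
where the first inequality uses the larger admissible set and the second uses the pointwise $Q$-dominance. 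The step I expect to be the main obstacle is handling the auxiliary terminal state and initialisation-based reward introduced in Definition~\ref{def:scMDP}: I need to verify that this boundary convention is applied identically when defining both Bellman operators so the comparison genuinely concerns two recursions on the same $(s,s')$ table rather than subtly different MDPs. The corner case of a state $s \in \mathcal{D}$ from which no action is recorded is benign -- BCQL is forced to $s_{\text{terminal}}$ at the initialised value while SCQL can still stitch to a genuine reachable successor -- and this is in fact precisely where the inequality becomes strict.
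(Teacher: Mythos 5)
Your proof is correct, but it takes a genuinely different route from the paper's. The paper argues by cases: it first shows that when no extra reachable states exist beyond the recorded pairs, the state-constrained MDP collapses to the batch-constrained one and the SCQL update reduces to the BCQL update via the deterministic QSS--QSA equivalence (so the values coincide); it then considers adding a single extra reachable transition $(\tilde{s},\hat{s}')$, shows the backup at the predecessor of $\tilde{s}$ takes a max over a strictly larger two-element set, invokes the policy improvement theorem to conclude the value can only increase, and finally asserts ``without loss of generality'' that this extends to arbitrarily many extra reachable pairs. Your argument instead establishes the set inclusion $\mathcal{N}_{\text{BCQL}}(s)\subseteq\mathcal{N}_{\text{SCQL}}(s)$ once and for all, derives pointwise dominance of the SCQL Bellman operator, and propagates it through value iteration by monotonicity to the fixed points. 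This buys you a cleaner and more rigorous treatment of the general case --- the paper's final extension step is the informal part of its proof, whereas your induction on $\mathcal{T}^{n}_{\text{SCQL}}Q_0\geq\mathcal{T}^{n}_{\text{BCQL}}Q_0$ handles any number of additional reachable transitions uniformly; the paper's case analysis, in exchange, makes more explicit exactly where the inequality becomes strict. The one point you flag as an obstacle --- that the terminal-state and initialisation conventions of Definition~\ref{def:scMDP} must be applied identically to both operators, and that $\max_{s'}Q^{*}(s,s')$ genuinely equals the policy's value in the underlying MDP for $s\in\mathcal{D}$ --- is handled no more carefully in the paper's own proof, so it does not put you behind the published argument.
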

\begin{proof}
    Given in Appendix \ref{Sect:AppendixProofs}.
\end{proof}

From Theorems \ref{Thm:4} and \ref{Thm:SCQL>=BCQL}, the state-constrained approach poses fewer restrictions than the batch-constrained counterpart while also always producing a policy at least as good. SCQL can perform more Q-value updates and maximise over more states than BCQL because it considers all reachable state-next state pairs from the dataset, rather than being limited to the explicit state-action pairs present in the dataset. This allows SCQL to exploit the structure of the state-constrained MDP more effectively and learn a better policy, with less required data. In the next section, this improvement is illustrated through an example which shows SCQL excelling even with a limited dataset.

\subsection{An illustrative example: maze navigation}

To elucidate the advantages of SCQL in comparison to BCQL, we explore a simple maze environment backed by a dataset with few trajectories. This pedagogical illustration aims to show how SCQL utilises the concept of state reachability - which enables improved performance over BCQL (which has no mechanism to utilise state reachability). Figure \ref{Fig:Maze} shows the full maze environment, visualised as a $10$ by $10$ grid. The available states are shown as white squares, while the red squares represent the maze walls. Each state in this environment is denoted by the $(x,y)$ coordinates within the maze, and the agent's available actions comprise of simple movements: left, right, up, and down. With each movement, the agent incurs a reward penalty of a small negative value ($r_{\text{pen}} = -0.1$), but upon successfully navigating to the gold star, it is rewarded with a large positive value ($r_{\text{goal}} = 10$). Consequently, the agent's primary objective becomes devising the most direct route to the star. In this deterministic environment, the outcomes of transitioning to a state and executing an action are congruent, meaning Eq. \eqref{Eq:QA-Learning} and \eqref{Eq:QS-Learning} mirror each other. This allows for a direct comparison between BCQL and SCQL in terms of their performance and ability to leverage limited datasets. This experiment converges after $100$ steps with a learning rate of $\alpha = 0.25$ and a discount factor of $\gamma = 0.99$.

\begin{figure*}[!htb]
    \centering %
    \subfigure[Maze and Dataset]{\includegraphics[width=0.32\textwidth]{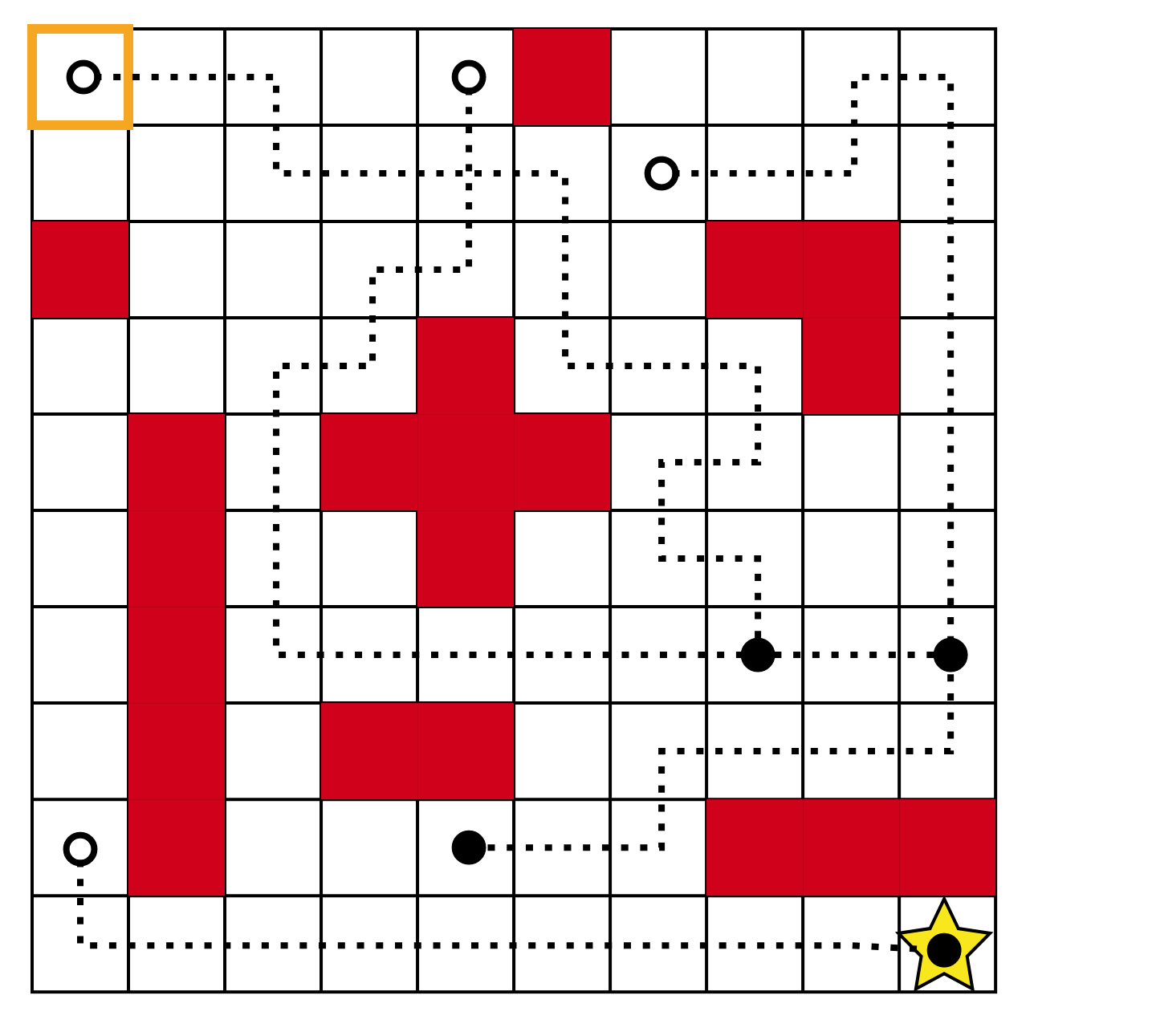}} 
    \subfigure[BCQL Final Policy]{\includegraphics[width=0.32\textwidth]{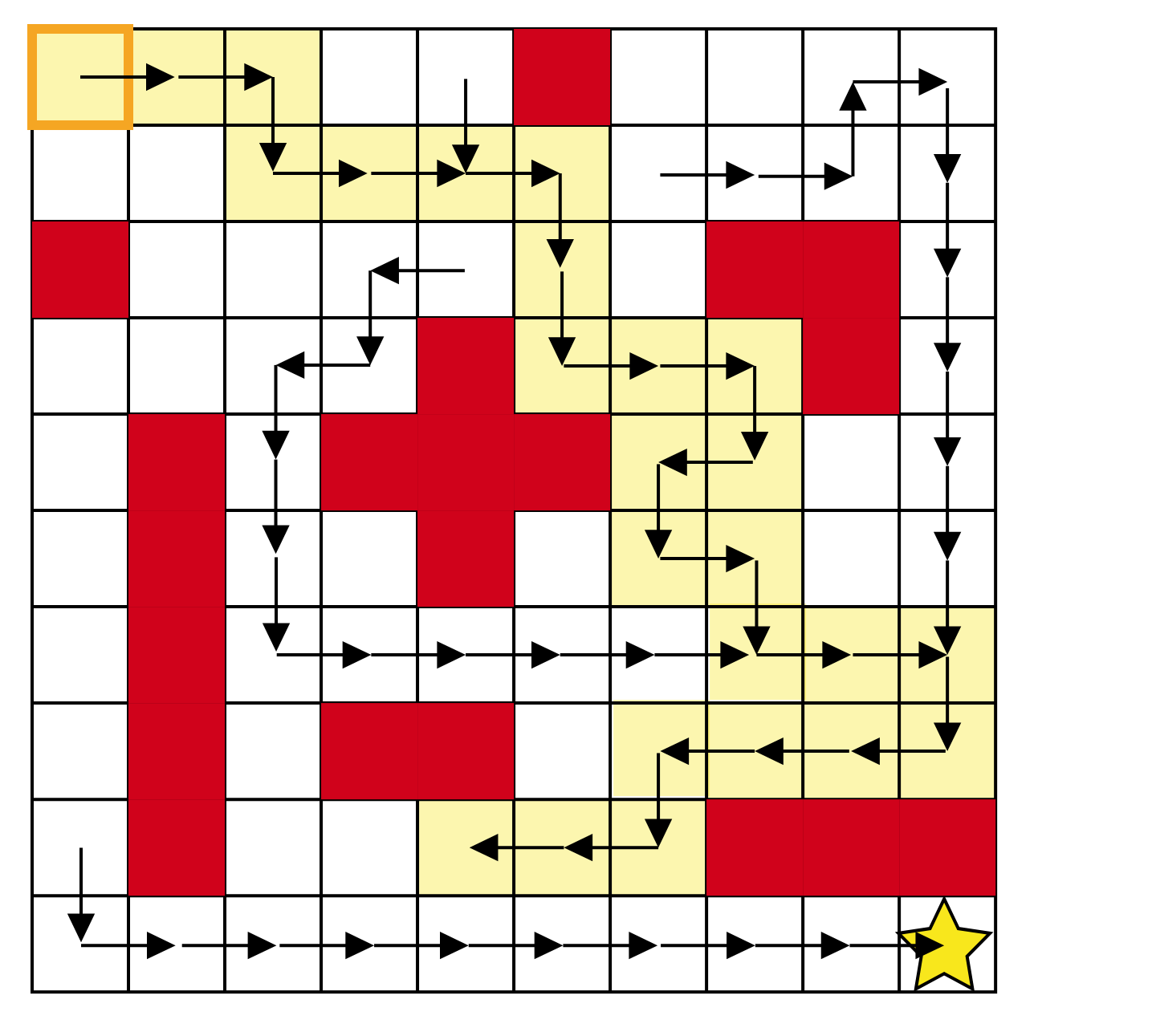}} 
    \subfigure[SCQL Final Policy]{\includegraphics[width=0.32\textwidth]{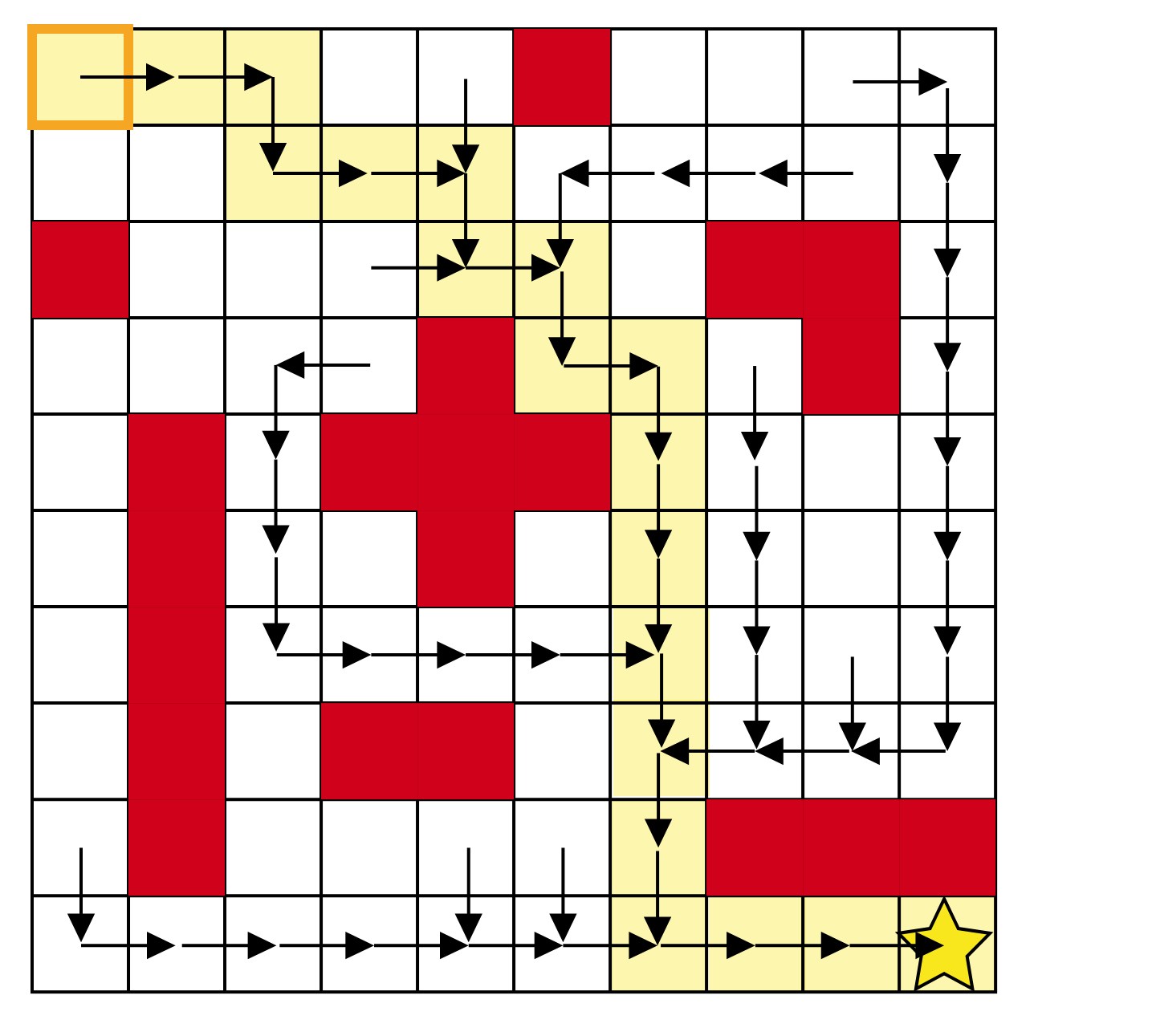}} 
    \caption{Comparison of BCQL and SCQL methods on a simple maze environment. (a) The maze is a 10 by 10 grid where the coordinate values $(x,y)$ represent the state. The high reward region is represented by a star and maze walls are represented by red grid squares. The dataset is made of 4 trajectories represented by the dotted lines where the white circle is the starting state and the black circle is the final state. (b) The final policy when applying BCQL to the dataset. (c) The final policy when applying SCQL to the dataset. }
    \label{Fig:Maze}
\end{figure*}

Figure \ref{Fig:Maze}a provides a visual representation of the maze and accompanying dataset. Here, trajectories start at the white circle and end at the black circle. Despite its simplicity, the dataset contains a scant number of states wherein multiple actions have been taken. Within this specific framework, the notion of state reachability is straightforward: any state that is just one block away from the current state is considered reachable. Formally the state reachability is, $\mathcal{SR}_{\mathcal{M}}((x,y)) = \{(x+ 1, y), (x -1 , y), (x, y + 1), (x, y-1)\}$. Although this definition includes the wall states as reachable, SCQL can only perform updates where $s'$ is reachable and in the dataset, thus the walls are avoided.
In other words, a state is considered reachable from the current state if it is one block away in any of the four cardinal directions (left, right, up, or down). This simple definition of state reachability is sufficient for demonstrating the advantages of SCQL in this illustrative example.

Applying BCQL with training as per Eq. \eqref{Eq:BCQ} and policy extraction as per 
\begin{equation}
    \pi(s) = \max_{a \text{ s.t. } (s,a) \in \mathcal{D}} Q(s,a),
\end{equation}
we obtain the policy depicted in Figure \ref{Fig:Maze}b, where arrows indicate policy direction. Conversely, for SCQL, training via Eq. \eqref{Eq:SCQL} and policy training through Eq. \eqref{Eq:Optimal policy} yields the policy in Figure \ref{Fig:Maze}c. For clarity, if no optimal movement can be found in the state due to no information in the dataset, it is left blank. SCQL is able to leverage the sparse dataset more efficiently, enabling the agent to reach the gold star from any dataset state position. BCQL, however, succeeds in only 11 of the 57 unique dataset states, which is equivalent to the performance of behavioural cloning. SCQL’s advantage stems from its ability to use state adjacency information for trajectory stitching, enabling the agent to construct optimal trajectories even in sparse datasets. By contrast, BCQL’s reliance on previously observed state-action pairs limits its performance when faced with a limited dataset. In many practical settings state reachability cannot be naturally assumed from the environment. As a result, in the following sections, a deep learning implementation of SCQL is defined where state reachability is learned from the dataset.

\section{Practical implementation: StaCQ}

In this section, we aim to provide a practical implementation of SCQL. Through this method we seek to benefit from the state-constrained framework, where Q-values are updated on all state and reachable next state pairs; and the policy uses the reachable states to find a more diverse, safe and closer to optimal policy. This requires a practical method to learn state reachability, as well as an actor-critic method that exploits the benefits of the state reachability.

As a result, we introduce \textbf{StaCQ} (\textbf{Sta}te-\textbf{C}onstrained deep \textbf{Q}SS-learning), our particular deep learning implementation of \textbf{SCQL}. We present a simple, yet effective, approach to learning state reachability, using dynamics models of the environment. StaCQ is a policy constraint method that regularises the policy towards the best reachable next state in the dataset. Our specific implementation learns a QSS-function that is aligned with the underlying theory. However, state-constrained methods can be adapted from the theory in various ways, similar to how many techniques modify the batch constraint in \textbf{BCQ} \citep{fujimoto2019BCQ}. For instance, alternative state-constrained methods might incorporate different policy regularisation techniques, such as those based on divergence measures or constraint violation penalties, or explore different approaches to estimating state reachability using available data and domain knowledge. The purpose here is to illustrate one specific implementation of the broader state-constrained framework, this implementation can be extended or modified in several ways.


\subsection{Estimating state reachability}

Central to the state-constrained approach is the concept of state reachability, as per Definition \ref{def:statereach}. Because the environments used in our benchmarks do not provide this information, we need to estimate reachability from the dataset. In our implementation, we consider $\hat{s}'$ reachable to $s$ if we can predict the action that reaches $\hat{s}'$. For this, we introduce a forward dynamics model, \(f_{\omega_1}(s,a)\), and an inverse dynamics model, \(I_{\omega_2}(s,s')\). Both models are implemented as neural networks and trained via supervised learning on triplets \((s,a,s') \sim \mathcal{D}\). 
The loss function for the forward model is given by
\begin{equation}\label{Eq:forwardmodelloss}
    \mathcal{L}_{\omega_1} = \mathbb{E}_{(s,a,s') \sim \mathcal{D}}[(f_{\omega_1}(s,a) - s')^2]
\end{equation}
while the loss for the inverse model is:
\begin{equation}\label{Eq:inversemodelloss}
    \mathcal{L}_{\omega_2} = \mathbb{E}_{(s,a,s') \sim \mathcal{D}}[(I_{\omega_2}(s,s') - a )^2].
\end{equation}

Consistent with prior research, we adopt an ensemble strategy to account for the epistemic uncertainty (uncertainty in model parameters) \citep{buckman2018sample, chua2018deep,janner2019trust, argenson2020model,yu2020mopo, yu2021combo}. Each model within the ensemble possesses a distinct set of parameters. The ensemble's final predictions for \(s'\) and \(a\) are computed by taking the average of their respective outputs. It is important to note that these models are relatively simple, and incorporating more complex architectures, such as those proposed in \cite{zhang2021autoregressive}, could significantly improve both model predictions and state reachability estimates.

Using these models, we propose an estimator of state reachability, denoted by $\widehat{\mathcal{SR}}_{\mathcal{M}}$, as:
\begin{equation} \label{eq:PracticalSR}
    \hat{s}' \in \widehat{\mathcal{SR}}_{\mathcal{M}}(s) \quad \text{iff} \quad ||f_{\omega_1}(s, I_{\omega_2}(s,\hat{s}')) - \hat{s}' || _{\infty} \leq \epsilon.
\end{equation}
This criterion suggests that $\hat{s}'$ is reachable from $s$ only if an action can be predicted that transitions the agent from $s$ to $\hat{s}'$. Our practical implementation of state reachability, shown in Eq. \eqref{eq:PracticalSR}, aims to satisfy Definition \ref{def:statereach}. Without access to the true environment dynamics, we must model these transitions using available data. We use the $L^\infty$-norm between observed and predicted states to ensure reachability is determined uniformly across all state dimensions. While other norms, such as $L^1$ and $L^2$, can be used, our experiments showed no significant difference in the policy produced by StaCQ between the $L^2$ and $L^\infty$ norms. An ablation study on the effects of norm choice and threshold distance is provided in Appendix \ref{Sect:ReachabilityAblation}.

It is important to note that $\epsilon$, in Eq. \eqref{eq:PracticalSR}, is a small positive value used to account for potential model inaccuracies, and we set $\epsilon = 0.1$ for all datasets. In Appendix \ref{Sect:AppendixSR}, we also present a method for reducing the computational complexity of this calculation.


\subsection{StaCQ}

StaCQ uses an actor-critic framework, where the critic, represented by the $Q(s,s')$ value, is trained by minimising the mean square error (MSE) between predicted and true QSS-values. In line with the theory, the QSS-values are updated across all pairs of states and their reachable next states:
\begin{equation}\label{EQ:StaCQ_QSupdate}
    \mathcal{L}_{\theta} = \mathbb{E}_{\substack{s \sim \mathcal{D}\\\hat{s}' \in \widehat{\mathcal{SR}}_{\mathcal{M}}(s)}}\Biggl[\Biggl( r(s,\hat{s}') + \gamma Q_{\theta'}(\hat{s}', f_{\omega_1}(\hat{s}', \pi_{\phi'}(\hat{s}'))) - Q_{\theta}(s,\hat{s}')\Biggr)^2 \Biggr]. 
\end{equation}

In this equation, the target actor produces an action, $\pi_{\phi'}(\hat{s}')$, which is passed into the forward model $f_{\omega_1}$ before being evaluated by the QSS-function. The specific reward for the reachable pair $(s,\hat{s}')$ may be unseen in the dataset, and if the reward function is unknown, it must be approximated using a neural network. Similar to the forward and inverse dynamics models, this can be achieved through supervised learning by minimising the MSE between the predicted and actual rewards:
\begin{equation}\label{Eq:RewardFunction}
    \mathcal{L}_{\omega_3} = \mathbb{E}_{(s,r,s') \sim \mathcal{D}}[(r_{\omega_3}(s,s') - r)^2].
\end{equation}
Past research suggests that this type of reward model is effective at estimating rewards for previously unseen transitions \citep{hepburn2022mbts_ml}. In Eq. \eqref{EQ:StaCQ_QSupdate}, $\theta'$ represents the parameters of the target Q-value, which are incrementally updated towards $\theta$: $\theta' \leftarrow \tau \theta + (1 - \tau )\theta'$, where $\tau$ is the soft update coefficient.

To train the deterministic actor, $\pi: \mathcal{S} \rightarrow \mathcal{A}$, we aim to maximise the current QSS-values while staying close to the best next state in the dataset. Therefore, the loss we minimise is:
\begin{equation} \label{Eq:PolicyStateBC}
    \mathcal{L}_{\phi} = \mathbb{E}_{s \sim \mathcal{D}}\Bigl[-\lambda Q_{\theta}(s, f_{\omega_1}(s, \pi_{\phi}(s))) +
    (f_{\omega_1}(s, \pi_{\phi}(s))) - \hat{s}')^2\Bigr],
\end{equation}
where the hyperparameter $\lambda$ controls the level of regularisation. The $\hat{s}'$ represents the best reachable next state from $s$ according to the QSS-value,
\begin{equation}
    \hat{s}' = \argmax_{s' \in \widehat{\mathcal{SR}}_{\mathcal{M}}(s)}  Q_{\theta}(s,s').
\end{equation}

This is, therefore, a similar policy extraction method to TD3+BC \citep{fujimoto2021td3bc}, as both methods use a MSE regulariser. However, StaCQ leverages the state reachability metric, allowing the maximisation to be performed over multiple states, whereas TD3+BC is limited to a single state-action pair.

\begin{minipage}[]{0.43\textwidth}
\vspace{-0.3cm}
\begin{algorithm}[H] 
\caption{StaCQ} 
\label{Algo:StaCQ}
\begin{algorithmic}[1]
\STATE {\bfseries Input:} Dataset $\mathcal{D}$, $T$ iterations, $\tau$
\STATE {\bfseries Initialise:} $\omega_1$, $\omega_2$, $\omega_3$, $\theta$, $\theta'$, $\phi$
\STATE Pre-train $f_{\omega_1}$ \& $I_{\omega_2}$: Eqs.\eqref{Eq:forwardmodelloss} \& \eqref{Eq:inversemodelloss}
\STATE Pre-train reachability criteria $\widehat{\mathcal{SR}}_{\mathcal{M}}$
\FOR{$t = 1, \dots, T$}
    \STATE Optimise reward function: Eq. \eqref{Eq:RewardFunction}
    \STATE Optimise QSS-value: Eq. \eqref{EQ:StaCQ_QSupdate}
    \STATE Optimise policy: Eq. \eqref{Eq:PolicyStateBC}
    \STATE Update target networks: \\ $\theta' \leftarrow \tau \theta +(1-\tau) \theta'$,\\ $\phi' \leftarrow \tau \phi +(1-\tau) \phi'$
\ENDFOR
\end{algorithmic}
\end{algorithm}
\end{minipage}\hfill
\begin{minipage}[]{0.55\textwidth}
 Similar to batch-constrained offline RL methods, StaCQ aims to balance staying close to the dataset while maximising the Q-value. The hyperparameter $\lambda$ controls the trade-off between producing actions that lead to high-value OOD states and staying close to states in the dataset, specifically the highest value reachable state. Since the forward model is fixed, a small amount of Gaussian noise is added to the policy action, to ensure the policy does not overfit to the forward model, resulting in a more robust policy. The full algorithm is presented in Algorithm \ref{Algo:StaCQ}. Notably, all three constituent models are trained using supervised learning, making them straightforward sub-processes within the overall architecture. The code can be found at \url{https://github.com/CharlesHepburn1/State-Constrained-Offline-Reinforcement-Learning}.
\end{minipage}

\section{Related work}\label{Sect:AppendixRW}

\paragraph{Model-free offline RL.} BCQ has emerged as one of the pioneering offline deep RL methodologies \citep{fujimoto2019BCQ}. It posits that having all $(s,a)$ pairs in the MDP enables the construction of an optimal QSA-value by focusing updates only on the $(s,a)$ pairs present in the dataset. As a result, BCQ restricts the policy to these observed $(s,a)$ pairs while allowing minimal perturbations to facilitate marginal improvements. Following BCQ, numerous algorithms have been introduced to develop alternative ways to constrain the policy to seen $(s,a)$ pairs, either by restricting it to the support of the $(s,a)$-distribution \citep{kumar2019bear, wu2019behaviourregularized, siegel2020keep, kostrikov2021offline, zhou2021plas, brandfonbrener2021OneStep} or directly limiting it to the $(s,a)$ pairs \citep{fujimoto2021td3bc}. 

In contrast to policy constraints, some methods tackle distributional shifts by pessimistically evaluating the value function on OOD $(s,a)$ pairs \citep{kumar2020cql, yu2021combo, an2021uncertainty}. There is also a subset of offline RL methods that combines both pessimistic value evaluation and policy constraints \citep{dadashi2021offline, beeson2024balancing}. Alternatively, implicit Q-learning (IQL) \citep{kostrikov2021IQL} and policy-guided offline RL (POR) \citep{xu2022por} aim to learn an optimal value function through expectile regression, with POR estimating a state-value function and IQL estimating an action-value function. Both methods learn the optimal policy (referred to as the guide policy in POR) via advantage-weighted regression. 
All of these approaches adopt the batch-constrained method, prioritising explicit $(s,a)$ pairs or $(s,s')$ pairs found in the dataset over unobserved actions. Our state-constrained method, however, relaxes the batch-constrained objective by requiring constraints only on reachable states in the dataset rather than explicit $(s,a)$ pairs.

\paragraph{Model-based offline RL.} Model-based methods learn the dynamics of the environment to support policy learning \citep{sutton1991dyna,janner2019trust}. There are generally two main approaches to model-based offline RL. The first approach learns a pessimistic model of the environment and performs rollouts with this augmented model, effectively increasing the dataset size \citep{yu2020mopo, kidambi2020morel, rigter2022rambo}. The second approach uses the models for planning, allowing the agent to look ahead during evaluation to determine the optimal path \citep{argenson2020model, zhan2021model, janner2022planning, diehl2021umbrella}. In contrast, StaCQ uses models to implement our concept of state reachability; thus, StaCQ does not generate new states or involve planning during evaluation.

\paragraph{State reachability.} The state-constrained approach relies heavily on the concept of state reachability. Prior studies on state reachability \citep{hepburn2022model_workshop, hepburn2022mbts_ml} have employed a Gaussian distribution to determine the likelihood of stitching from $s$ to $s'$. In contrast, our method treats a state as either reachable or not, avoiding the use of continuous probability distributions. State reachability is closely related to state similarity, a well-explored concept in the literature \citep{zhang2020learning, agarwal2021contrastive, le2021metrics}. One way to measure state similarity is through bisimulation metrics, which are based on the environment's dynamics \citep{ferns2012metrics}. Traditional bisimulation methods often require full state enumeration \citep{chen2012complexity, bacci2013computing, bacci2013fly}, leading to the development of more scalable pseudometric approaches \citep{castro2020scalable}. While incorporating a pseudometric into our state reachability framework could be a potential avenue, we leave this for future work. Our current state reachability metric is simple, intuitive, and grounded in the core definition.

\paragraph{Trajectory stitching.} A key feature of offline RL is its ability to stitch together trajectories \citep{kostrikov2021IQL}, i.e., combining previously observed trajectories to create a novel one that completes the task. Imitation learning methods, such as behavioural cloning  \citep{pomerleau1988alvinn, pomerleau1991efficient}, struggle with stitching due to their inability to distinguish between optimal and sub-optimal states. Similarly, the Decision Transformer (DT) \citep{chen2021DT} formulates the offline RL problem as a supervised learning task using a goal-conditioned policy. While DT exhibits weak stitching capabilities, efforts have been made to address this by integrating offline RL principles \citep{yamagata2023q, wu2023elastic}. Combining DT with StaCQ's policy extraction approach could yield further improvements in stitching. However, our current work focuses on a straightforward implementation of state-constrained offline RL.

\section{Experimental results}

We evaluate StaCQ against several model-free and model-based baselines on the D4RL benchmarking datasets from the OpenAI Mujoco tasks \citep{todorov2012mujoco,fu2020d4rl}. The model-free baselines we compare against are BCQ \citep{fujimoto2019BCQ}, TD3+BC \citep{fujimoto2021td3bc}, and IQL \citep{kostrikov2021IQL}. BCQ is the foundational batch-constrained offline RL method on which many subsequent methods are based, making it the most direct theoretical comparison. TD3+BC is the most similar to StaCQ in terms of implementation, as both use a BC-style regularisation process. IQL, a current SOTA model-free method, provides a strong comparison point.

The model-based baselines include MBTS \citep{hepburn2022mbts_ml}, Diffuser \citep{janner2022planning}, and RAMBO \citep{rigter2022rambo}. MBTS employs a data augmentation strategy with a different state reachability approach compared to StaCQ. Diffuser is a planning-based method that uses a diffusion model, making its dynamics model more complex than StaCQ's. RAMBO, a SOTA model-based method, performs rollouts using a pessimistic dynamics model of the environment. Although StaCQ does not use models for planning or rollouts, we include these model-based algorithms for comparison.

Additionally, we devised a one-step version of StaCQ \citep{brandfonbrener2021OneStep}, detailed in Appendix \ref{Sect:AppendixOneStep}, to showcase the flexibility of the state-constrained framework.

\begin{table}[h] 
\caption{Average normalised scores on the D4RL datasets. StaCQ results have been obtained by taking an average over 5 seeds. The bolded scores are within $95\%$ of the highest performing method.}\label{tab:resultsmain}
\small
\begin{tabular}{p{-0.2cm}p{1.9cm}p{0.6cm}p{0.6cm}p{1.1cm}p{0.7cm}p{0.8cm}p{0.9cm}p{1.2cm}p{1.8cm} p{2.1cm}}
\toprule
                             &                & & \multicolumn{3}{c}{Model-free baselines} & \multicolumn{3}{c}{Model-based baselines} & \multicolumn{2}{c}{\text{Our methods}}  \\
                             &                & BC   & BCQ   & TD3+BC   & IQL  & MBTS    & Diffuser    & RAMBO   &\text{ StaCQ} & \text{OneStep StaCQ}  \\ \midrule
\multirow{4}{*}{\rotatebox[origin=c]{90}{Hopper}}      & Rand         &     $6.2$ &    $7.6$   &  $8.5$        &  -    &   -      &    -         &  $\mathbf{21.6}$       &  $17.5 \pm 12.8$    & $7.5 \pm 0.4$ \vspace{0.1cm}\\
                             & Med-Rep  & $22.5$     & $51.0$      & $60.9$          &  $\mathbf{94.7}$    &   $50.2$      &         $93.6$    &   $\mathbf{96.6}$      &  $\mathbf{99.1 \pm 1.3}$    &$\mathbf{99.4 \pm 0.6}$ \vspace{0.1cm}\\
                             & Med         & $56.8$     & $60.9$      & $59.3$          & $66.3$     &  $64.3$       &     $74.3$        &  $92.8$       & $\mathbf{100.2 \pm 3.0}$  &$93.3 \pm 3.2$    \vspace{0.1cm}\\
                             & Med-Exp  & $54.2$     &  $85.9$     &       $98.0$   &   $91.5$   &   $94.8$      &          $103.3$   &    $83.3$     & $\mathbf{111.9 \pm 0.2}$    &$92.1 \pm 9.2$  \vspace{0.1cm} \\ \midrule
\multirow{4}{*}{\rotatebox[origin=c]{90}{Walker2D}}   & Rand         &    $1.4$  & $4.4$      &    $1.6$      &   -   &   -      &      -       &  $\mathbf{11.5}$       &   $4.4 \pm 6.4$  & $6.5 \pm 0.9$ \vspace{0.1cm} \\
                             & Med-Rep  &  $25.5$    &   $60.7$    & $81.8$         &  $73.9$    &   $61.5$      &         $70.6$    &     $\mathbf{85.0}$    &    $\mathbf{87.2 \pm 7.3}$   & $\mathbf{88.1 \pm 5.3}$\vspace{0.1cm}\\
                             & Med        &   $39.4$   &   $73.7$    &      $83.7$    &   $78.3$   & $78.8$        &      $79.6$       &    $86.9$     &     $\mathbf{92.2 \pm 3.0}$ & $85.7 \pm 10.4$ \vspace{0.1cm} \\
                             & Med-Exp  &  $90.5$    &     $94.5$  &       $\mathbf{110.1}$   &  $\mathbf{109.6}$    & $108.8$        &      $106.9$       &    $68.3$     &   $\mathbf{116.2 \pm 1.5}$ & $107.6 \pm 8.5$   \vspace{0.1cm} \\ \midrule
\multirow{4}{*}{\rotatebox[origin=c]{90}{Halfcheetah}} & Rand         &   $2.1$   &   $2.2$    &   $11.0$       &   -   &      -   &      -       &    $\mathbf{40.0}$     & $24.3 \pm 1.3$   &$2.6 \pm 1.6$   \vspace{0.1cm} \\
                             & Med-Rep  &  $34.5$    &  $41.1$     &     $44.6$     & $44.2$     &    $39.8$     &          $37.7$   &   $\mathbf{68.9}$      &   $52.2 \pm 0.5$  &$46.4 \pm 0.4$ \vspace{0.1cm} \\
                             & Med         &  $42.4$    &  $46.6$     &       $48.3$   &  $47.4$    &  $43.2$       &         $42.8$    &    $\mathbf{77.6}$     &      $57.6 \pm 0.6$ &$50.0 \pm 0.2$ \vspace{0.1cm} \\
                             & Med-Exp  &  $66.6$    &   $87.8$    &         $90.7$ &  $86.7$    &  $86.9$       &        $88.9$     &  $\mathbf{93.7}$       &  $\mathbf{96.4 \pm 2.9 }$  & $\mathbf{94.9 \pm 1.0}$ \vspace{0.1cm}    \\ \midrule
                             & \hspace{-0.65cm}Total (excl. rand)         &  $432.4$    &   $602.2$    &      $677.4$    &  $692.6$    &     $628.3$    &  $697.7$           &   $753.1$      &  $\mathbf{813.0}$ &757.5  \\ \midrule
 \multirow{6}{*}{\rotatebox[origin=c]{90}{Antmaze}}                            & Umaze         &  $53.4$    &   $70.0$    &  $78.6$        &  $\mathbf{87.5}$    &  -       &   -          &   $25.0$      & $\mathbf{89.8 \pm 5.0}$&$75.6 \pm 3.8$      \vspace{0.1cm}  \\
                             & U-diverse  &$64.6$      &  $44.0$     &          $\mathbf{71.4}$    &  $62.2$        &    -   &  -    & $0.0$        & $64.0 \pm 29.7$ & $67.2 \pm 13.8$      \vspace{0.1cm} \\
                             & M-play    &$0.0$      &    $0.0$   &     $3.0$     & $\mathbf{71.2}$     &-         & -            & $16.4$        & $47.2 \pm 27.1$   &$20.6 \pm 12.5$  \vspace{0.1cm}  \\
                             & M-diverse & $0.8$     &  $0.0$     &    $10.6$      &  $\mathbf{70.0}$    & -        &   -          & $23.2$        &   $47.4 \pm 27.6$ &$13.0 \pm 4.2$   \vspace{0.1cm}  \\
                             & L-play     & $0.0$     &  $0.0$     &     $0.0$     &  $\mathbf{39.6}$    & -        &     -        &   $0.0$      &      $31.4 \pm 6.0$&$5.2 \pm 5.2$  \vspace{0.1cm} \\
                             & L-diverse  & $0.0$     &    $0.0$   &      $0.2$    &    $\mathbf{47.5}$  &        - &      -       &      $2.4$   &     $40.0 \pm 22.4$&$2.2 \pm 1.6$  \vspace{0.1cm} \\ \midrule
                             & \hspace{-0.65cm}Total (Antmaze)          &$118.8$      &  $114.0$     &          $163.8$    & $\mathbf{378.0}$  & -       & -            &    $67.0$     &  $319.8$ &$183.8$    \\ \bottomrule
\end{tabular}
\end{table}

Results for both StaCQ and the one-step version of StaCQ are shown in Table \ref{tab:resultsmain}. For the Antmaze tasks, it is unnecessary to train a reward model since these tasks feature sparse rewards, meaning the reward for being in $s'$ remains constant. As all $s'$ are in the dataset, the reward, $r(s,s')$, is always known for all $s$ where $s' \in \mathcal{SR}_{\mathcal{M}}(s)$. Also for the Antmaze tasks, StaCQ deploys independent target Q-value estimates. This is a techniques from \cite{ghasemipour2022so} that improves the performance of the policy in the medium and large Antmaze tasks. Despite potential model errors in estimating state reachability, StaCQ performs remarkably well against the baselines. The learning curves for all tasks in Table \ref{tab:resultsmain} are shown in Appendix \ref{Sect:LearningCurves}.

StaCQ outperforms all model-free baselines in the locomotion tasks (Hopper, Halfcheetah, and Walker2d) across the entire range of datasets (random, medium-replay, medium, and medium-expert). These environments are complex robotics tasks, with the random and medium-replay consisting of few good-quality episodes. StaCQ also surpasses most model-based baselines in the locomotion tasks, with the exception of RAMBO, which outperforms StaCQ in the Halfcheetah and random tasks. However, on average StaCQ is the highest-performing method across all locomotion tasks, consistently ranking as either the top or second-best performing method. TD3+BC uses a consistent hyperparameter across all datasets, whereas our StaCQ methods use an environment-dependent hyperparameter (see Appendix \ref{Sect:AppendixImpDetails}). To enable a fair comparison, Appendix \ref{SectE} compares TD3+BC and StaCQ using the same hyperparameter tuning strategy.

Through utilising the higher number of reachable states and thus finding higher quality OOD actions that lead to in-distribution states, StaCQ performs well across most of the locomotion tasks. However its performance in the Antmaze tasks is comparatively lower. StaCQ does outperform RAMBO in the Antmaze tasks, which are notoriously challenging for methods involving dynamics models \citep{wang2021offline, rigter2022rambo}, and performs competitively against the model-free baselines. Notably, StaCQ outperforms BCQ and TD3+BC, which are the most comparable baselines in terms of both theory and implementation. Methods such as IQL, which rely on advantage-weighted policy extraction, may have an advantage in these tasks due to their ability to handle sparse rewards more effectively. Nonetheless, StaCQ is either the highest or second highest performing method on the Antmaze tasks against the comparative methods.

\section{Discussion and conclusions}\label{Sect:Discussion}

In this paper, we introduced a method for state-constrained offline RL. Most prior offline RL approaches are batch-constrained, limiting the learning of the Q-function or policy to $(s,a)$ pairs or the $(s,a)$-distribution in the dataset. In contrast, our state-constrained methodology confines learning updates to states within the dataset, offering significant potential reductions in dataset size required for achieving optimality. By focusing on states rather than state-action pairs, the state-constrained approach enables more efficient learning updates and can achieve optimality with smaller datasets compared to batch-constrained methods.

Central to the proposed state-constrained approach is the concept of state reachability, which we define and illustrate in a simple maze environment. This example underscores the improved stitching capability of state-constrained methods over batch-constrained ones, particularly in datasets with limited trajectories. Additionally, we provide theoretical backing that affirms the potential for our method's convergence based on dataset states rather than state-action pairs. Similar to BCQ \citep{fujimoto2019BCQ}, our theory assumes a deterministic MDP. This focus on deterministic environments simplifies the theoretical development \citep{fujimoto2019BCQ, jin2021pessimism, shi2022pessimistic}, and provides a fair comparison with the BCQ theory, while also providing clear guarantees for convergence, making it an appropriate starting point for the state-constrained approach. Our theoretical goal is to demonstrate convergence and provide a comparison with BCQ. As a result, Theorem 3.6 shows that SCQL converges to the optimal policy with a smaller requisite dataset size than BCQ; and, Theorem 3.7 shows that the policy produced by SCQL can always be preferred to the policy produced by BCQL.

Based from these theoretical developments, our practical implementation also defines reachable states deterministically — states are considered reachable or not, regardless of potential stochastic transitions. Deterministic settings align with many practical applications, such as robotic control or industrial systems, where transitions can be accurately modelled. However, some real-world environments are stochastic, where actions lead to a distribution of possible next states. Future work will extend state reachability to account for stochastic transitions by incorporating probabilistic models of the environment's dynamics. Theoretically, the definition of state reachability can be extended to stochastic MDPs, for example $s'$ is reachable from $s$ if it is the most likely next state when performing action $a$, i.e $s' \in \mathcal{SR}_{\mathcal{M}}(s) \text{ if } p(s'|s,a)  =  \max_{s'_i \sim p(\cdot|s,a)}p(s'_i|s,a)$. Practically, this will involve redefining reachability in terms of the likelihood of reaching a state under certain actions, possibly using techniques such as probabilistic graphical models. The key challenge will be maintaining computational efficiency while handling the increased complexity of probabilistic transitions. Extending the framework in this way will allow the state-constrained approach to be applied to more complex, uncertain environments, improving its robustness and real-world applicability.



We also introduced \textit{StaCQ}, an implementation of state-constrained deep offline RL. StaCQ leverages QSS-learning \citep{edwards2020estimating} to estimate the value of transitioning from $s$ to $s'$, which is particularly advantageous in a state-constrained context as it avoids action dependency. StaCQ determines state reachability using forward and inverse dynamics models, trained via supervised learning on the dataset. While the performance of these models can influence StaCQ, the algorithm consistently demonstrates strong performance in complex environments, often surpassing baseline batch-constrained methods. StaCQ relies on an ensemble of forward and inverse dynamics models. Increasing the size of the ensemble would enable more accurate estimates, however the number of models is consistent with prior research \citep{janner2019trust} and is accurate enough for StaCQ to perform well, as shown in Table \ref{tab:resultsmain}. StaCQ uses an ensemble of four critics whereas OneStep StacQ uses a single critic. These values were chosen as they enable stable estimates across all datasets.  

There are several potential improvements to explore in future work. Despite the simplicity of the models we used, our reachability measure delivered SOTA performance. Future enhancements could improve this by adding more complexity to the dynamics models, reducing model error, and capturing the true environment dynamics more accurately. Alternatively, developing entirely new state reachability criteria independent of dynamics models could significantly enhance the algorithm's effectiveness. Potential ideas include leveraging graph-based approaches or using reinforcement learning techniques to learn reachability directly from data. Additionally, the policy extraction process could be improved by incorporating elements such as a Decision Transformer \citep{chen2021DT} or diffusion policies \citep{ajay2022conditional}. These techniques have shown promise in offline RL by leveraging the structure of the dataset to generate more diverse and effective policies. Integrating them into StaCQ could allow the algorithm to exploit the available data more effectively.

Similar to batch-constrained methods, further advances might arise from relaxed state constraints, albeit with tailored considerations for this setting. Using ensembles to assess the uncertainty of QSA-values \citep{an2021uncertainty, ghasemipour2022so, beeson2024balancing} and constraining actions to specific regions has proven advantageous in the batch-constrained domain. Similar methodologies could be extended to state-constrained approaches, such as employing ensembles of QSS-functions to evaluate uncertainty.

Beyond improving state reachability estimates and policy extraction mechanisms, the potential for advancement in state-constrained methodologies is vast. One intriguing direction is to refine state reachability estimates to focus solely on states, excluding action predictions. Such advancements could lead to algorithms that rely entirely on state-only datasets, which is particularly relevant for real-world scenarios like learning from video data, where states are observable (as images), but the actions leading to transitions are unknown. Expanding the notion of state reachability is another promising avenue. Our current definition, based on one-step state reachability ($\exists a$ such that $p(s'|s,a) = 1$), could be extended to multiple steps. For example, a two-step reachability could be defined as $s'' \in \mathcal{SR}_{\mathcal{M}}(s)$ if $\exists a_1, a_2$ such that $p(s'|s,a_1) = 1$ and $p(s''|s',a_2) = 1$. This principle can be further extrapolated, providing a foundation for novel model-based offline RL algorithms. Extending this notion of state reachability will allow for a higher number of reachable states to be found, while being sure they lead, in $k$-steps, to in-distribution states. Appendix \ref{Sect:2stepReachMaze} shows the theoretical benefits of a multi-step state reachable method on a simple maze environment. However, a practical method that incorporates model error is left for future work. Lastly, the state-constrained methodology holds significant potential for multi-agent offline RL scenarios, where complexity grows with the action space size. By reducing action dependence, the state-constrained approach appears well-suited to tackle such challenges.

In conclusion, we have ventured into the realm of state-constrained offline RL, introducing a novel paradigm that emphasises states over state-action pairs in the dataset. While this paper lays the foundational groundwork, the path forward calls for further investigation, refinement, and integration with other emerging techniques. We believe that these endeavours can push the boundaries of current offline RL methodologies and offer unprecedented solutions to complex problems.

\vspace{\fill} 

\bibliography{refs}
\bibliographystyle{TMLR_styles/tmlr}

\newpage
\appendix


\section{Missing proofs}\label{Sect:AppendixProofs}

\paragraph{Proof of Theorem \ref{Thm:QSConverge} }  Since each $(s,s')$ pair is visited infinitely often, consider consecutive intervals during which each $(s,s')$ transition occurs at least once. We want to show the max error over all entries in the Q table is reduced by at least a factor of $\gamma$ during each such interval. 

Let $\Delta_n$  by the max error in $Q_n$, $\Delta_n := \max_{s,s'} |Q_n(s,s') - Q^*(s,s')|$.
Then,
\begin{align} |Q_{n+1}(s,s') - Q^*(s,s')| &= |(r(s,s')+\gamma \max_{s''}Q_n(s',s'') -  (r(s,s')+\gamma \max_{s''} Q^*(s',s''))| \label{Eq:ThmA1.1} \\ 
&= \gamma|\max_{s''}Q_n(s',s'') -\max_{s''} Q^*(s',s'')| \label{Eq:ThmA1.2} \\ 
&= \gamma |\max_{s''}Q_{n}(s',s'') - \max_{s''} Q^*(s',s'')| \label{Eq:ThmA1.3} \\ 
&\leq \gamma \max_{s''}|Q_{n}(s',s'') - Q^*(s',s'')| \label{Eq:ThmA1.4}  \\ 
&\leq \gamma \max_{\hat{s}, s''}|Q_{n}(\hat{s}, s'') - Q^*( \hat{s}, s'')| \label{Eq:ThmA1.5}   \end{align}
Which implies $|Q_{n+1}(s,s') - Q^*(s,s')| \leq \gamma \Delta_n$. $\hfill \square$

Where for brevity the maximisation $\max_{s''}$ is the shorthand for $\max_{s'' \text{ s.t. } s'' \in \mathcal{SR}_{\mathcal{M}}(s')}$. Also, Eq. \eqref{Eq:ThmA1.1} is using the training rule; Eqs. \eqref{Eq:ThmA1.2} and \eqref{Eq:ThmA1.3} are simplifying and rearranging; \eqref{Eq:ThmA1.4} uses the fact that $|\max_x f_1(x) - \max_x f_2(x)| \leq \max_x |f_1(x) - f_2(x)|$; Eq. \eqref{Eq:ThmA1.5} we introduced a new variable $\hat{s}$ for which the maximisation is performed - this is permissible as allowing this additional variable to differ will always be at least the maximum value.

Thus, the updated $Q_{n+1}(s,s')$ for any $s, s'$ is at most $\gamma$ times the maximum error in the $Q_n$ table, $\Delta_n$. The largest error in the initial table, $\Delta_0$, is bounded because the values of $Q_0(s,s')$ and $Q^*(s,s')$ are bounded $\forall s,s'.$ Now, after the first interval during which each $s,s'$  is visited the largest error will be at  most $\gamma \Delta_0$. After $k$ such intervals, the error will be at most $\gamma^k \Delta_0$. Since each state is visited infinitely often, the number of such intervals is infinite and $\Delta_n \rightarrow 0$ as $n \rightarrow \infty$.

\paragraph{Proof of Theorem \ref{Thm:2}.} 1.Define deterministic state-constrained MDP $\mathcal{M}_{\mathcal{S}}$. We define $\mathcal{M}_{\mathcal{S}} = (\mathcal{S}, \mathcal{A}, \mathcal{P}_{\mathcal{S}}, \mathcal{R}, \gamma)$ where $\mathcal{S}$ and $\mathcal{A}$ are the same as the original MDP. We have the transition probability where 
\begin{equation} 
    p_{\mathcal{S}}(s'|s, a) = \begin{cases}  
    1 & \text{if } (s,s' \in \mathcal{D} \text{ and } s' \in \mathcal{SR}_{\mathcal{M}}(s)) \text{ or }  (s \notin \mathcal{D} \text{ and } s' = s_{\text{terminal}}) \\ 
    0 & \text{otherwise.}
    \end{cases}
\end{equation}
this is a deterministic transition probability for $s, s'$ that are in the dataset and reachable. If a pair exists but is not in the dataset we set the rewards to be the initialised $Q(s,s')$ values, otherwise they have the reward seen in the dataset.

2. We have all the same assumptions under $\mathcal{M}_{\mathcal{S}}$ as we do under $\mathcal{M}$, apart from infinite $(s,s')$ visitation, so we need that and then it follows from Theorem \ref{Thm:QSConverge}.

Note that sampling under the dataset $\mathcal{D}$ with uniform probability satisfies the infinite state-next-state visitation assumptions of the MDP $\mathcal{M}_{\mathcal{S}}$. For a reachable pair $(s,s')\notin\mathcal{D}$  [this may be due to $s \notin \mathcal{D}$ or $s' \notin \mathcal{D}$], $Q(s,s')$ will never be updated and will correspond to the initialised value. So sampling from $\mathcal{D}$ is equivalent to sampling from the MDP $\mathcal{M}_{\mathcal{S}}$ and QSS-learning converges to the optimal value under $\mathcal{M}_{\mathcal{S}}$ by following Theorem \ref{Thm:QSConverge}.$\hfill \square$

\paragraph{Proof of Theorem \ref{Thm:3}.} This follows from Theorem \ref{Thm:QSConverge}, noting the state-constraint is non-restrictive with a dataset which contains all possible states. $\hfill \square$

\paragraph{Proof of Theorem \ref{Thm:4}.} This result follows from the fact that if every $s$ is visited infinitely then due to the definition of state reachability we can evaluate over all $(s,s')$-pairs in the dataset. Then following Theorem \ref{Thm:2}, which states QSS-learning learns the optimal value for the MDP $\mathcal{M}_{\mathcal{S}}$ for $s,s' \in \mathcal{D}$. However, the deterministic $\mathcal{M}_{\mathcal{S}}$ corresponds to the original $\mathcal{M}$ in all seen state and reachable next-state pairs. Noting that state-constrained policies operate only on $s,s' \in \mathcal{D}$, where $\mathcal{M}_{\mathcal{S}}$ corresponds to the true MDP, it follows that $\pi^*$ will be the optimal state-constrained policy from the optimality of QSS-learning. $\hfill \square$


\begin{theorem}\label{Thm:QS=QA}
    In a deterministic setting, QSA-values are equivalent to QSS-values
\end{theorem}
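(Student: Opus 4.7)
The plan is to exploit the fact that in a deterministic MDP, the transition dynamics define a function $\tau: \mathcal{S} \times \mathcal{A} \to \mathcal{S}$ where $\tau(s,a)$ is the unique state $s'$ with $p(s'|s,a) = 1$. Combined with the assumption $r(s,a,s') = r(s,s')$, every quantity that appears in the QSA Bellman equation can be rewritten purely in terms of $(s,\tau(s,a))$. So the first step is to make this reduction explicit: for any $(s,a)$, the immediate reward is $r(s,\tau(s,a))$, and the next state is deterministically $\tau(s,a)$.

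Next, I would compare the two Bellman optimality operators side by side. For QSA, $Q^*(s,a) = r(s,\tau(s,a)) + \gamma \max_{a'} Q^*(\tau(s,a), a')$, while for QSS, $Q^*(s,s') = r(s,s') + \gamma \max_{s''\in \mathcal{SR}_\mathcal{M}(s')} Q^*(s',s'')$. The key observation is that the set $\{Q^*(s',a') : a' \in \mathcal{A}\}$ coincides (as a multiset, with duplicates from redundant actions) with $\{Q^*(s',s'') : s'' \in \mathcal{SR}_\mathcal{M}(s')\}$ under the correspondence $s'' = \tau(s',a')$, because every reachable $s''$ is attained by some action and every action produces a reachable $s''$. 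Hence the two maxima agree. I would then define $\tilde Q(s,a) := Q^*_{\mathrm{SS}}(s,\tau(s,a))$ and verify that $\tilde Q$ satisfies the QSA Bellman optimality equation; by uniqueness of the fixed point (standard contraction argument under assumptions A2--A4), $\tilde Q = Q^*_{\mathrm{SA}}$, i.e.\ $Q^*_{\mathrm{SA}}(s,a) = Q^*_{\mathrm{SS}}(s,\tau(s,a))$ for all $(s,a)$.

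Finally, since $Q^*_{\mathrm{SS}}$ converges by Theorem \ref{Thm:QSConverge} and $Q^*_{\mathrm{SA}}$ converges by the classical Q-learning result, the equivalence of the two values gives the stated theorem. I would also briefly remark that under this correspondence the greedy policies coincide in value: $\max_a Q^*_{\mathrm{SA}}(s,a) = \max_{s' \in \mathcal{SR}_\mathcal{M}(s)} Q^*_{\mathrm{SS}}(s,s')$, which is precisely why an action can be recovered from the chosen next state via the inverse dynamics model $I(s,s')$, as discussed in the preliminaries.

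The main obstacle is handling redundant actions cleanly: several actions $a_1,\ldots,a_k$ may all map to the same $\tau(s,a_i) = s'$, so the map from $\mathcal{A}$ to $\mathcal{SR}_\mathcal{M}(s)$ is in general many-to-one. The argument has to be phrased in terms of the \emph{range} of values $\{Q(s,\tau(s,a))\}$ rather than any bijection between actions and next states, and the equivalence should be read as ``QSA-values depend on the action only through the induced next state''. Once this is stated carefully, the rest of the proof is a direct identification of Bellman operators and an appeal to fixed-point uniqueness.
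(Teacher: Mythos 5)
Your proposal is correct. Note that the paper does not prove this statement itself; it simply defers to Theorem~1 of \citet{edwards2020estimating}, and your argument is essentially a self-contained reconstruction of that cited proof: identify the deterministic transition map $\tau(s,a)$, observe that the image of $a' \mapsto \tau(s',a')$ is exactly $\mathcal{SR}_{\mathcal{M}}(s')$ so the two Bellman maxima range over the same set of values, and conclude $Q^*_{\mathrm{SA}}(s,a) = Q^*_{\mathrm{SS}}(s,\tau(s,a))$ by uniqueness of the fixed point. Your closing caveat about redundant actions --- that the correspondence is many-to-one and the argument must be phrased in terms of the range of values rather than a bijection --- is exactly the right care to take, and is in fact the motivation the original QSS work gives for preferring $Q(s,s')$ in redundant action spaces.
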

\begin{proof}
    See Theorem 2.2.1 in \citep{edwards2020estimating}
\end{proof}

\paragraph{Proof of Theorem \ref{Thm:SCQL>=BCQL}.}
Case 1: In this situation, for all states in the dataset, we do not have any extra reachable states, other than the pairs i.e. $\forall s \in \mathcal{D}, \mathcal{SR}_{\mathcal{M}}(s) = \{s'\}$, where $(s,s') \in \mathcal{D}$.
In this case the state-constrained MDP, Definition \ref{def:scMDP}, is equivalent to the batch-constrained MDP in \citep{fujimoto2019BCQ}. The condition of the state-constrained MDP,
\begin{equation}
    s,s' \in \mathcal{D} \text{ and } s' \in \mathcal{SR}_{\mathcal{M}}(s)
\end{equation}
becomes $(s,s') \in \mathcal{D}$ as $s' \in \mathcal{SR}_{\mathcal{M}}(s)$ only exists where $(s,s') \in \mathcal{D}$. From this, the probability transition function for the state-constrained and batch-constrained MDPs are equivalent and thus so are the MDPs themselves.
Finally, for this case we just need to show that the SCQL and BCQL updates are the same.
So under this case condition the SCQL update becomes
\begin{equation}
    Q(s,s') \leftarrow  (1-\alpha) Q(s,s') + \alpha \Bigl[ r(s,s') +
\gamma \max_{s'' \text{ s.t. }(s',s'')\in \mathcal{D}} Q(s',s'') \Bigr].
\end{equation}

From Theorem \ref{Thm:QS=QA}, for a transition $(s,a,s')\in \mathcal{D}$ we have $Q(s,a) = Q(s,s')$ and thus for the transition $(s',a',s'')\in \mathcal{D}$ we have $Q(s',a') = Q(s',s'')$, therefore the SCQL update is equivalent to
\begin{equation}
Q(s,a) \leftarrow  (1-\alpha) Q(s,a) + \alpha \Bigl[ r(s,a) +
\gamma \max_{a' \text{ s.t. } (s',a')\in \mathcal{D}} Q(s',a') \Bigr].
\end{equation}

Therefore, for case 1 BCQL and SCQL have the same Q-value update and therefore have the same optimal policy. So, $\pi_{\text{BCQL}}(s) = \pi_{\text{SCQL}}(s),  Q(s, \pi_{\text{BCQL}}(s)) = Q(s, \pi_{\text{SCQL}}(s)), \text{and thus } V_{\pi_{\text{BCQL}}}(s)= V_{\pi_{\text{SCQL}}}(s), \forall s \in \mathcal{D}$. 
\newline \newline
Case 2: In this case, for a single state in the dataset, we have one reachable next state that is unseen as a pair in the dataset, i.e. $\exists \tilde{s}, \hat{s}' \in \mathcal{D} \text{ s.t. } \hat{s}' \in \mathcal{SR}_{\mathcal{M}}(\tilde{s}) \text{ and } (\tilde{s}, \hat{s}')\notin \mathcal{D}.$ In this case the state-constrained MDP probability transition function becomes: for $a = I(s,s')\in \mathcal{A}$
\begin{equation} 
    p_{\mathcal{S}}(s'|s, a) = \begin{cases}  
    1 & \text{if } ((s,s') \in \mathcal{D}) \text{ or }(s \notin \mathcal{D} \text{ and } s' = s_{\text{terminal}})  \text{ or } (s = \tilde{s} \text{ and } s' = \hat{s}')\\
    0 & \text{otherwise.}
    \end{cases}
\end{equation}
This is the transition function for the batch-constrained MDP but allowing for an extra transition from $\tilde{s}$ to $\hat{s}'$.
Now comparing BCQL and SCQL Q-value updates, we have all equivalent values for the transition $(s,a,s')$ except for the trajectory that contains the state $\tilde{s}$. For the converged QSS-value, $Q^*$, let $\tilde{s}_{-1}$ be the state in the trajectory previous to $\tilde{s}$ and $\tilde{s}'$ be the next state after $\tilde{s}$ in the trajectory,
\begin{equation}
    Q^*(\tilde{s}_{-1}, \tilde{s}) \leftarrow (1- \alpha) Q^*(\tilde{s}_{-1}, \tilde{s}) + \alpha \Bigl[ r(\tilde{s}_{-1},\tilde{s}) +
\gamma \max \{Q^*(\tilde{s},\tilde{s}'), Q^*(\tilde{s}, \hat{s}')\} \Bigr]
\end{equation}

If $Q^*(\tilde{s},\tilde{s}') \geq Q^*(\tilde{s}, \hat{s}')$ then again BCQL is equivalent to SCQL. However, if $Q^*(\tilde{s},\tilde{s}') < Q^*(\tilde{s}, \hat{s}')$ then SCQL will have higher values than BCQL for all states previous to $\tilde{s}$ and therefore will produce a higher quality policy by the policy improvement theorem. In this case, as $Q^*(\tilde{s},\tilde{s}') < Q^*(\tilde{s}, \hat{s}')$, $\exists s$ such that $Q(s, \pi_{\text{SCQL}}(s)) > Q(s, \pi_{\text{BCQL}}(s))$, thus $V_{\pi_{\text{SCQL}}}(s) \geq V_{\pi_{\text{BCQL}}}(s), \forall s \in \mathcal{D}$.
\newline \newline
Then without loss of generality Case 2 can be extended to all cases where we have multiple reachable next states for multiple dataset states, that have higher value according to the optimal QSS-value. $\hfill \square$

\section{Reducing the complexity of state reachability estimation}
\label{Sect:AppendixSR}
Directly computing $\widehat{\mathcal{SR}}_{\mathcal{M}}$ for every state $s$ would entail comparing each state in the dataset against all others, an approach that is computationally prohibitive for larger datasets. To address this challenge, we calculate the range of potentially reachable states for each state dimension. This calculation is based on a set of random actions, denoted as $\{a_{\text{rand}}^i\}_{i=1}^n$.


For a given state $s$, we first determine its range by calculating the minimum and maximum values using $f_{\omega_1}$. Specifically, the minimum range $R_{\text{min}}(s)$ is computed as $\min_{i} f_{\omega_1}(s, a_{\text{rand}}^i)$, and the maximum range $R_{\text{max}}(s)$ is $\max_{i} f_{\omega_1}(s, a_{\text{rand}}^i)$.
Subsequently, we construct a smaller set of states within the range $(R_{\text{min}}(s), R_{\text{max}}(s))$, using an R-tree \citep{guttman1984rtree}, a data structure that efficiently identifies states within a specified hyper-rectangle (our range). This approach, leveraging the R-tree's efficient search capability, dramatically reduces the size of the dataset. Consequently, the models are applied only to this refined set of states, leading to a significant reduction in computational complexity.

\section{Implementation details} \label{Sect:AppendixImpDetails}
For our experiments we use $\epsilon = 0.1 $ for the state reachability criteria where the LHS is also scaled by the maximum of the difference in state range, i.e our state reachability estimate for state $s$ is determined by
\begin{equation}
    \left|\left| \frac{f_{\omega_1}(s, I_{\omega_2}(s,s')) - s'}{R_{\text{max}}(s) - R_{\text{min}}(s)} \right|\right|_{\infty} < \epsilon.
\end{equation}
This means that the maximum state dimension model prediction error must be within $10\%$ of the true state range. Due to this we also normalise all states which gives more accurate model prediction and is done in the same way as TD3+BC \citep{fujimoto2021td3bc}, i.e let $s_i$ be the $i$th feature of state $s$
\begin{equation}
    s_i = \frac{s_i - \mu_i}{\sigma_i +\epsilon_s},
\end{equation}
where $\mu_i$ and $\sigma_i$ are the mean and standard deviation of the $i$th feature across all states and $\epsilon_s = 10^{-3}$ is a small normalisation constant. Similar to TD3+BC, we do not normalise states for the Antmaze tasks as this is harmful for the results.

For the MuJoCo locomotion tasks we evaluate our method over $5$ seeds each with $10$ evaluation trajectories; whereas for the Antmaze tasks we also evaluate over $5$ seeds but with $100$ evaluation trajectories. Just like TD3+BC \citep{fujimoto2021td3bc}, we scale our hyperparameter by the average Q-value across the minibatch
\begin{equation}
    \lambda = \frac{\alpha}{\frac{1}{N} \sum_i Q(s_i, s'_i)},
\end{equation}
where $N$ is the size of the minibatch (in our case $N = 256$) and $s'_i$ is the next state where the policy action leads from $s$. We aim to find a consistent $\alpha$ hyperparameter across the different environments. We find that the optimal consistent hyperparameters are $\alpha = \{ 1, 5, 10 \}$ for Hopper, Walker2d and Halfcheetah respectively. However for Halfcheetah -medium expert we use $\alpha = 0.5$ due to the significant improvement. For the Antmaze tasks, each maze size is a new environment and we use $\alpha = \{2, 10, 19 \}$ for the umaze, medium and large environments respectively. Across all environments, we add a small amount of zero mean Gaussian noise to the policy action before being input into the forward model, we use a fixed variance of $0.1$. It should be noted that making this policy noise vary across the different environments could improve results greatly, however we aim to keep this fixed so that we have a general method that can be easily optimised on other environments.

The actor, critic and reward model are represented as neural networks with two hidden layers of size $256$ and ReLU activation. They are trained using the ADAM optimiser \citep{kingma2014adam} and have learning rates $3e-4$, the actor also has a cosine scheduler. We use an ensemble of $4$ critic networks and take the minimum value across the networks. Also we use soft parameter updates for the target critic network with parameter $\tau = 0.005$, and we use a discount factor of $\gamma = 0.99$. For the locomotion tasks we use a shared target value to update the critic towards, whereas for the Antmaze tasks we use independent target values for each critic value. Both the inverse and forward dynamics models are represented as neural networks with three hidden layers of size $256$ and ReLU activation. They are trained using the ADAM optimiser with a learning rate of $4e-3$ and a batch size of $256$. We use an ensemble of $7$ forward models and $3$ inverse models and then take a final prediction as an average across the ensemble. Also our forward model predicts the state difference rather than the next state directly, which improves model prediction performance. 

To attain the results for BCQ, we re-implemented the algorithm from the paper \citep{fujimoto2019BCQ} and trained on the version 2 datasets. Following the same practises as StaCQ we evaluate BCQ over 5 seeds each with 10 evaluations on the MuJoCo locomotion tasks and $5$ seeds each with $100$ evaluations on the Antmaze tasks. All other results in Table \ref{tab:resultsmain} were obtained from the original authors' papers. Our experiments were performed with a single GeForce GTX 3090 GPU and an Intel Core i9-11900K CPU at 3.50GHz. Each run takes on average $2.5$ hours, where models are pre-trained and state reachability is given (the same models and reachability lists are given to each run). We provide $5$ runs for each dataset ($18$) which gives a total run time of $225$ hours.

\section{One-step method}\label{Sect:AppendixOneStep}
StaCQ, Algorithm \ref{Algo:StaCQ}, that we have introduced in this paper is an actor-critic method that learns the QSS-value and policy together. Alternatively, the QSS-value can be learned directly from the data, in an on-policy fashion, then a policy can be extracted directly from this on-policy QSS-value. These approaches are known as one-step methods \citep{brandfonbrener2021OneStep}. In this section we adapt StaCQ into a one-step method.

\subsection{Estimating QSS-values}
So that the QSS-values are learned on-policy while still taking advantage of the state-constrained framework we introduce a small modification to Eq. \eqref{Eq:SCQL}. Since the pair $(s', s'')$ is unseen in the dataset, we use the following approximation:
\begin{equation}\label{EQ:QS_approx}
    \max_{\substack{s'' s.t. (s'', s''') \in \mathcal{D}\\ s'' \in \mathcal{SR}_{\mathcal{M}}(s')}} \{r(s',s'') + \gamma Q(s'',s''')\}
\end{equation}
to replace $ \max_{\substack{s'' s.t. s'' \in \mathcal{D}\\ s'' \in \mathcal{SR}_{\mathcal{M}}(s')}} Q(s',s'')$.
This adjustment ensures that the QSS-value is only evaluated on explicit state-next-state pairs, thereby avoiding OOD $(s,s')$-pairs. Although this approach diverges slightly from the theoretical method, where QSS-value updates are performed on every pair, it provides a practical solution.

Using Eq. \eqref{EQ:QS_approx}, the approximation of state reachability and the reward model Eq. \eqref{Eq:RewardFunction}, the on-policy state-constrained QSS-values can be refined by reducing the MSE between the target and the actual QSS-values. The target is determined by identifying the maximum value across reachable states:
\begin{equation}\label{EQ:StaCQ_OneStep_QSupdate}
    \mathcal{L}_{\theta} = \mathbb{E}_{(s,s') \sim \mathcal{D}}\Biggl[\Biggl( r(s,s') + \gamma \max_{\substack{s'' s.t. (s'', s''') \in \mathcal{D},\\ s'' \in \mathcal{SR}_{\mathcal{M}}(s')}} \{r_{\omega_3}(s',s'') 
    + \gamma Q_{\theta'}(s'',s''')\} - Q_{\theta}(s,s')\Biggr)^2 \Biggr]. 
\end{equation}
Here, $\theta'$ represents the parameters for a target Q-value which are incrementally updated towards $\theta$: $\theta' \leftarrow \tau \theta + (1 - \tau )\theta'$, with $\tau$ being the soft update coefficient.

\subsection{Policy extraction step}
Eq. \eqref{EQ:StaCQ_OneStep_QSupdate} gives a one-step optimal state-constrained QSS-value. However, this equation alone does not produce an optimal action. To determine the optimal action, we need to add a policy extraction step. We use the same policy extraction method as StaCQ, a state behaviour cloning regularised policy update similar to TD3+BC \citep{fujimoto2021td3bc}:
\begin{equation}            \label{Eq:TD3+BC_style_policyExtraction}
    \mathcal{L}_{\phi} = \mathbb{E}_{s \sim \mathcal{D}}\Bigl[\lambda Q_{\theta}(s, f_{\omega_1}(s, \pi_{\phi}(s))) +
    (f_{\omega_1}(s, \pi_{\phi}(s))) - \hat{s}')^2\Bigr],
\end{equation} 
where
\begin{equation}
    \hat{s}' = \argmax_{\substack{s'  \mathrm{ s.t. } (s', s'') \in \mathcal{D}\\ s' \in \widehat{\mathcal{SR}}_{\mathcal{M}}(s)}} \{r_{\omega_3}(s,s') + \gamma Q_{\theta}(s',s'')\},
\end{equation}

\begin{center}
 \begin{minipage}{0.6\linewidth} 
\begin{algorithm}[H] 
\caption{StaCQ (One Step RL version)} 
\label{Algo:StaCQ_OneStep}
\begin{algorithmic}[1]
\STATE {\bfseries Input:} Dataset $\mathcal{D}$, $T$ number of iterations, $\tau$
\STATE {\bfseries Initialise:} parameters $\omega_1$, $\omega_2$, $\omega_3$, $\theta$, $\theta'$, $\phi$
\STATE Pre-train $f_{\omega_1}$ and $I_{\omega_2}$ using Eqs.\eqref{Eq:forwardmodelloss} and \eqref{Eq:inversemodelloss}
\STATE Pre-train reachability criteria $\widehat{\mathcal{SR}}_{\mathcal{M}}$
\FOR{$t = 1, \dots, T$}
    \STATE Optimise reward function according to Eq. \eqref{Eq:RewardFunction}
    \STATE Optimise QSS-value according to Eq. \eqref{EQ:StaCQ_OneStep_QSupdate}
    \STATE Update target networks: $\theta' \leftarrow \tau \theta +(1-\tau) \theta'$.
\ENDFOR
\FOR{$t = 1, \dots, T$}
    \STATE Optimise policy: Eq.  \eqref{Eq:TD3+BC_style_policyExtraction}
\ENDFOR
\end{algorithmic}
\end{algorithm}
\end{minipage}
\end{center}

Again, due to the forward model being fixed, a small amount of Gaussian noise is added to the policy action before being input into the model. This creates a more robust policy by ensuring the policy does not exploit the forward model.
The complete procedure for the one step version of StaCQ is provided in Algorithm \ref{Algo:StaCQ_OneStep}. 

\subsection{One step method implementation details}
In the original OneStepRL paper \citep{brandfonbrener2021OneStep}, they evaluate their method over $10$ seeds where the $\lambda$ hyperparameter has been tuned over the first $3$ seeds and then evaluated with the $\lambda$ fixed for the remaining $7$. However as we want a consistent hyperparameter for each environment we choose $\alpha = \{0.1, 1.0, 5.0\}$ for Hopper, Walker2d and Halfcheetah respectively and for the Antmaze tasks we choose $\alpha = \{10, 40, 100 \}$ for the umaze, medium and large environments respectively. The one step method also uses a single critic, we found that increasing the number of critics deteriorated results. All other implementation details are the same as StaCQ, Appendix \ref{Sect:AppendixImpDetails}.

\section{Consistent hyperparameter experiment} \label{SectE}

The single StaCQ hyperparameter, $\lambda$ in the policy update, is tuned per environment (for example, using the same $\lambda$ for all Hopper datasets). This ensures StaCQ is not over-tuned to each dataset while still producing realistic, high-quality results. Most state-of-the-art methods have different hyperparameters for each dataset \citep{yu2020mopo,kidambi2020morel,brandfonbrener2021OneStep,wang2021offline, ghasemipour2022so,rigter2022rambo,xu2022por}; this is not the case for StaCQ. However, TD3+BC and IQL both use a consistent hyperparameter across all datasets. TD3+BC uses an ensemble of two critics, whereas StaCQ has an ensemble of four.
To compare fairly with these methods, we perform two comparisons. First, we fix StaCQ’s hyperparameter across all locomotion tasks. Second, we tune TD3+BC’s hyperparameter for each environment and supplement it with an ensemble of four critics. Table \ref{tab:consistenthyptune} shows these comparisons.

\begin{table*}[t]
\centering
\vskip 0.15in
\setlength{\tabcolsep}{5pt}
\begin{tabular}{p{0.2cm}p{2.15cm}p{1.5cm}p{1.5cm}p{1.5cm}p{1cm}p{2.5cm}p{2.5cm}}
\toprule
 &                 & \multicolumn{3}{c}{Fixed hyperparameter} & & \multicolumn{2}{c}{Per environment tuning \& 4 critics }\\
 &               & IQL   & TD3+BC   & StaCQ  & & TD3+BC    & StaCQ      \\ \midrule

\multirow{3}{*}{\rotatebox[origin=c]{90}{Random}}  
                             & Halfcheetah  & - & $11.0$ &$\mathbf{15.8}$& &$18.0 \pm 1.1$ & $\mathbf{24.3 \pm 1.3}$\vspace{0.1cm}\\
                             & Hopper         & -& $\mathbf{8.5}$   & $\mathbf{8.6}$&  &$7.6 \pm 0.7$&$\mathbf{17.5 \pm 12.8}$  \vspace{0.1cm}\\
                             & Walker2d  & -& $1.8$     &  $\mathbf{3.5}$&  &$0.9 \pm 0.2$&$\mathbf{4.4 \pm 6.4}$  \vspace{0.1cm}\\
                             \midrule
\multirow{3}{*}{\rotatebox[origin=c]{90}{Med}} 
                             & Halfcheetah  &  $47.4$&$48.3$    &  $\mathbf{50.7}$ &  &$49.0 \pm 0.3$  &$\mathbf{57.6 \pm 0.6}$  \vspace{0.1cm}\\
                             & Hopper        &   $66.3$   &   $59.3$ & $\mathbf{100.1}$&  &$61.5 \pm 7.7$ &$\mathbf{100.2 \pm 3.0} $\vspace{0.1cm} \\
                             & Walker2d  &  $78.3$    &     $83.7$ & $\mathbf{88.6}$&  &$83.6 \pm 4.8$&$\mathbf{92.2 \pm 3.0}$  \vspace{0.1cm} \\ \midrule
\multirow{3}{*}{\rotatebox[origin=c]{90}{Med-Rep}}
                             & Halfcheetah  &  $44.2$    &  $\mathbf{44.6}$ &$\mathbf{46.5}$&  &$44.6 \pm 0.7$ &$\mathbf{52.2 \pm 0.5}$   \vspace{0.1cm} \\
                             & Hopper         &  $94.7$    &  $60.9$ & $\mathbf{98.7}$&  &$61.5 \pm 23.8$&$\mathbf{99.1 \pm 1.3}$ \vspace{0.1cm} \\
                              & Walker2d         &   $73.9$   &   $\mathbf{81.8}$ & $\mathbf{85.5}$&  &$\mathbf{84.0 \pm 3.0}$&$\mathbf{87.2 \pm7.3}$
                              \vspace{0.1cm}    \\ \midrule
\multirow{3}{*}{\rotatebox[origin=c]{90}{Med-Exp}}
                             & Halfcheetah  &  $86.7$    &  $\mathbf{90.7}$  &$\mathbf{92.4}$&  &$90.9 \pm 2.2$  &$\mathbf{96.4 \pm 2.9}$ \vspace{0.1cm} \\
                             & Hopper         &  $91.5$    &  $\mathbf{98.0}$&$\mathbf{94.0}$ &  &$104.5 \pm 11.8$ &$\mathbf{111.9 \pm 0.2}$   \vspace{0.1cm} \\
                              & Walker2d         &   $\mathbf{109.6}$   &   $\mathbf{110.1}$ &$\mathbf{114.2}$ &  &$\mathbf{110.6 \pm 0.5}$ &$\mathbf{116.2 \pm 1.5}$
                              \vspace{0.1cm}    \\        
                              \midrule
                             & Total        &  $692.6$    &   $698.7$    &      $\mathbf{798.6}$    & & $716.7$    &     $\mathbf{859.2}$    \vspace{0.1cm} \\ \bottomrule
\end{tabular}
\caption{Average normalised scores for StaCQ, TD3+BC and IQL with consistent hyperparameter tuning strategies. The left hand side compares StaCQ with IQL and TD3+BC where there is a consistent hyperparameter across all datasets. The right hand side compares StaCQ with TD3+BC with $4$ critics and hyperparameters tuned per dataset. The bold scores are within the $95\%$ of the highest performing method in each dataset. Scores represent mean over $5$ seeds of $10$ evaluation $\pm$ the standard deviation.} 
\label{tab:consistenthyptune}
\end{table*}
From Table \ref{tab:consistenthyptune}, we can see that with consistent hyperparameter tuning strategies StaCQ still vastly outperforms TD3+BC (and IQL). This now provides a clear comparison showing how more Q-value evaluations through the state reachability metric allows for higher-quality policies than the batch-constrained counterparts.
\section{Ablation study: exploring the effect of the reachability metric} \label{Sect:ReachabilityAblation}
In this section, we explore the effect of changing the reachability metric for StaCQ. That is, for fixed datasets, Walker2d-Medium and Walker2d-MediumExpert, we explore how the change of reachability norm and threshold for reachable states, $\epsilon$, effects the average normalised score.

Table \ref{tab:Reach_ablation} shows the comparison of the reachability metric with different thresholds. For the $L^1$-norm with $\epsilon = 0.01 \text{ and } 0.05$ and $l^{\infty}$-norm with $\epsilon = 0.01$ on the Walker2d-Medium dataset, the threshold was too small to find any reachable states. In these cases, the reachable states were the explicit next state in the dataset, therefore Eq. \eqref{EQ:StaCQ_QSupdate} and \eqref{Eq:PolicyStateBC} where only updated using $(s,s') \in \mathcal{D}$. This is essentially the same method as TD3+BC, except using $Q(s,s')$ instead of $Q(s,a)$. In these cases, StaCQ performs on par with TD3+BC, as expected, but much worse than StaCQ with a larger threshold.

Otherwise, for all metrics increasing the threshold leads to an increased score up to a certain point (e.g. $\epsilon = 0.1$ for $L^2$-norm) where too many states are considered reachable, when they are not, at which point the average score decreases. On the Walker2d-medium dataset, StaCQ seems fairly robust to a change in reachability as even with the large thresholds the scores do not decrease as far as TD3+BC. On the Walker2d-MediumExpert dataset, when the threshold is too large, $\epsilon = 0.5$ for the $L^2$ and $L^{\infty}$ -norms, the performance degrades massively. This is because too many states are considered reachable which are in fact not.

As a result of this study, the reachability metric chosen across all datasets is the $L^{\infty}$-norm with $\epsilon = 0.1$, as these values produced the most reliable scores. It should be noted that other norms and thresholds could have been used to achieve similar or greater scores, however across all datasets these hyperparameters performed well. 

\begin{table}
    \centering
    \begin{tabular}{cccc}
    \toprule
    Norm     & Threshold ($\epsilon$)& Walker2d-Medium &Walker2d-MediumExpert \\\midrule
    $L^2$-norm     &  $0.01$ & $94.1 \pm 3.1$ & $116.0 \pm 1.8$\\
    & $0.05$ & $94.4 \pm 1.4$ & $104.5 \pm 0.1$ \\
    & $0.1$ & $92.1 \pm 1.0$ &$94.0 \pm 22.8$\\
    & $0.5$ & $91.9 \pm 2.5$ & $0.3 \pm 0.6$  \\ \midrule
    $L^1$-norm &$0.01$ &$82.6\pm 1.3$ &$115.5 \pm 1.9$ \\
    & $0.05$&  $82.6\pm 1.3$ &$114.6 \pm 1.8$ \\
    & $0.1$ & $84.1 \pm 17.0$ & $113.7 \pm 5.0$ \\
    &$0.5$ & $90.3 \pm 2.5$ &$107.8 \pm 11.1$ \\ \midrule
    $L^{\infty}$-norm& $0.01$ & $82.6\pm 1.3$ &$115.7 \pm 1.3$  \\
    & $0.05$& $92.6 \pm 3.6 $ &$114.2 \pm 1.9$\\
    & $0.1$ & $92.2 \pm 3.0$ & $116.2 \pm 1.5$\\
    &$0.5$ & $91.4 \pm 5.8$ &$6.7 \pm 12.4$\\ \bottomrule
    \end{tabular}
    \caption{Comparison of different reachability metric norms and thresholds on the Walker2d-Medium and Walker2d-MediumExpert datasets. The normalised score is averaged for 10 evaluations providing a standard deviation over 5 seeds. }
    \label{tab:Reach_ablation}
\end{table}

\section{Learning curves for StaCQ} \label{Sect:LearningCurves}
In this section we provide the learning curves of StaCQ, across all datasets. During training, at 5000 gradient step intervals, the mean of 10 evaluations (for the locomotion tasks) or 100 evaluations (for the Antmaze tasks) of StaCQ had been recorded across 5 seeds. This is used to monitor the performance of StaCQ during training and the environment interactions are not used for the policy or Q-value updates.
\begin{figure*}[!htb]
    \centering %
    \subfigure[Hopper]{\includegraphics[width=0.45\textwidth]{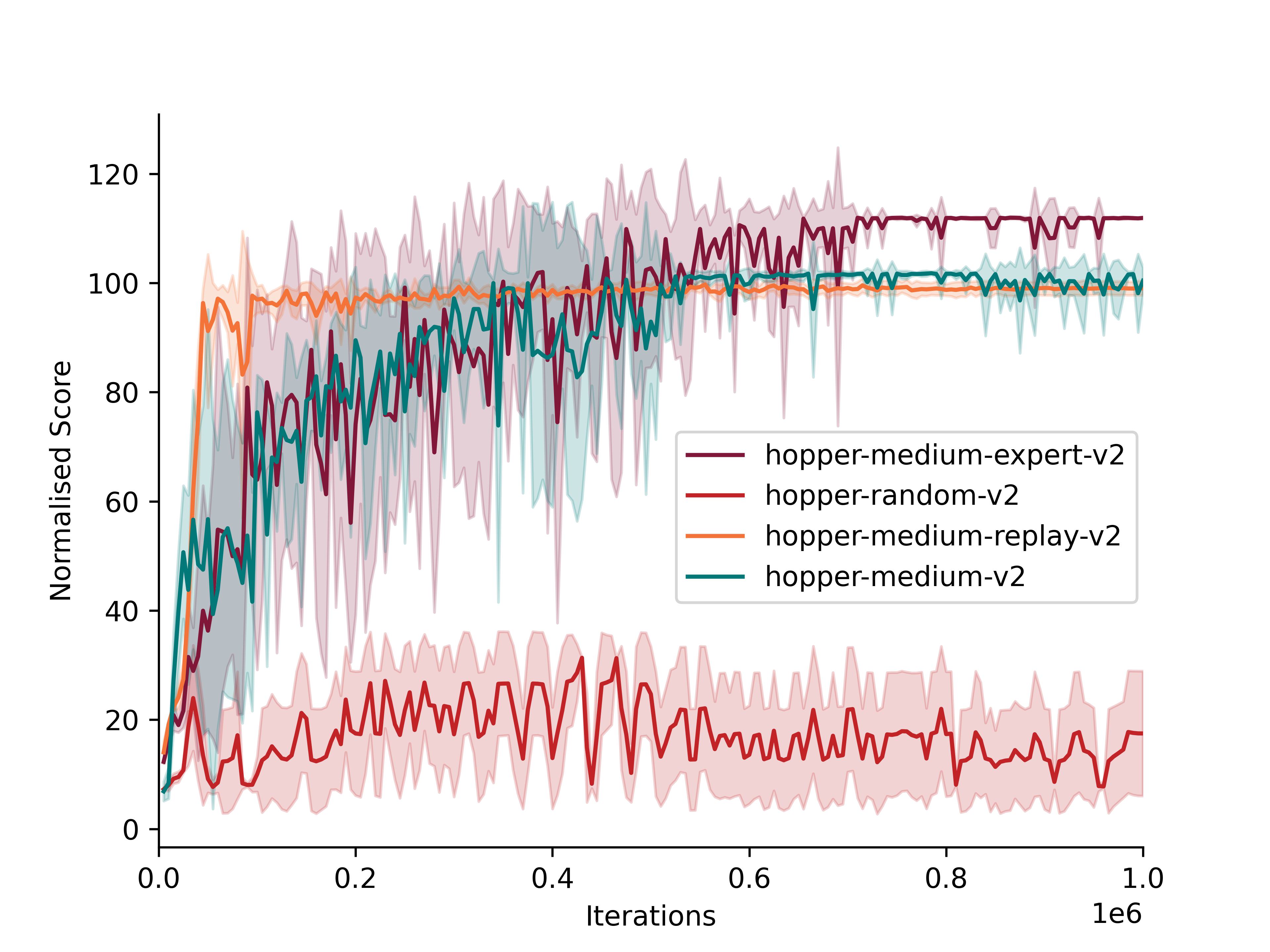}} 
    \subfigure[Walker2d]{\includegraphics[width=0.45\textwidth]{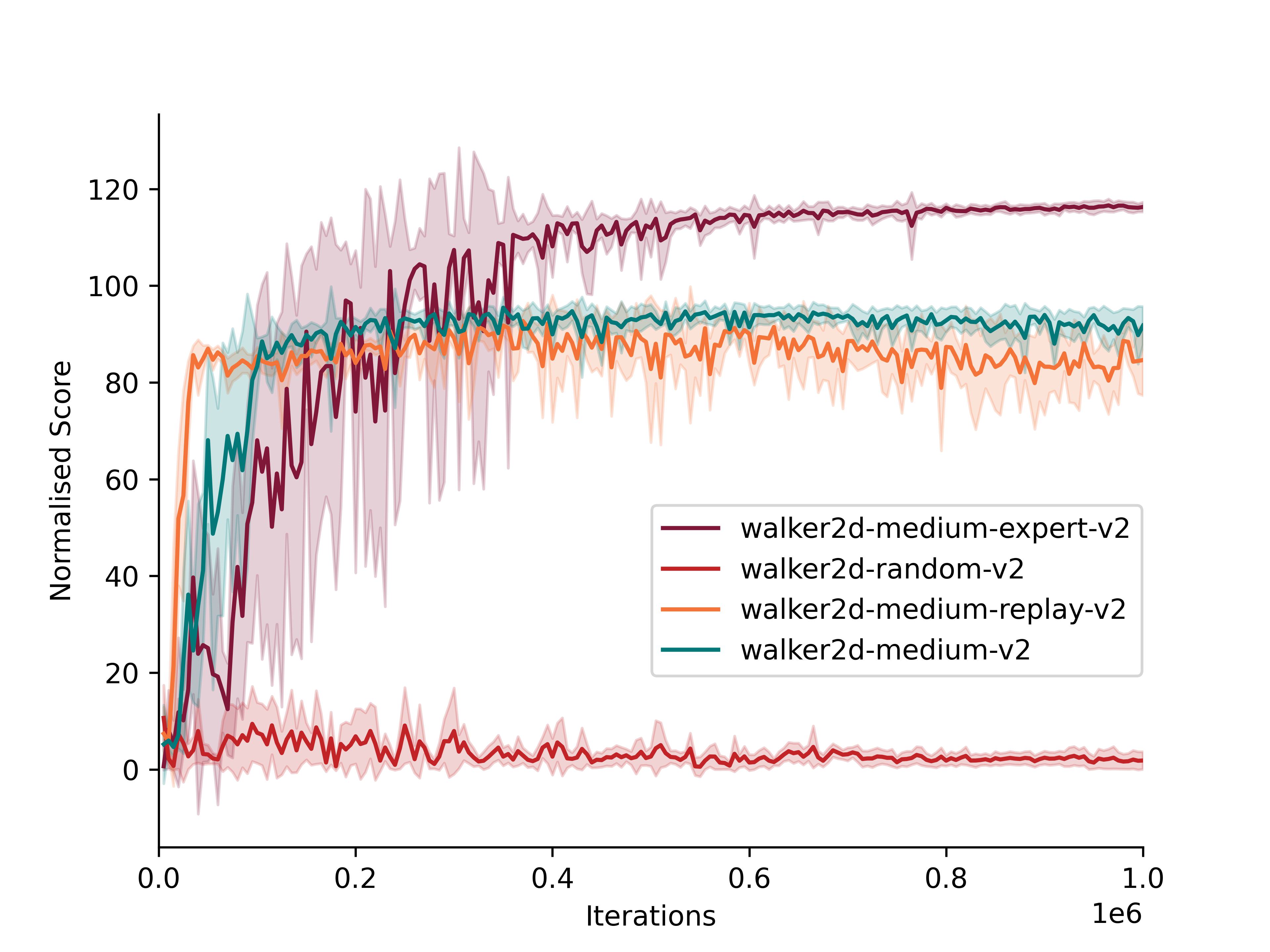}} 
    \subfigure[Halfcheetah]{\includegraphics[width=0.45\textwidth]{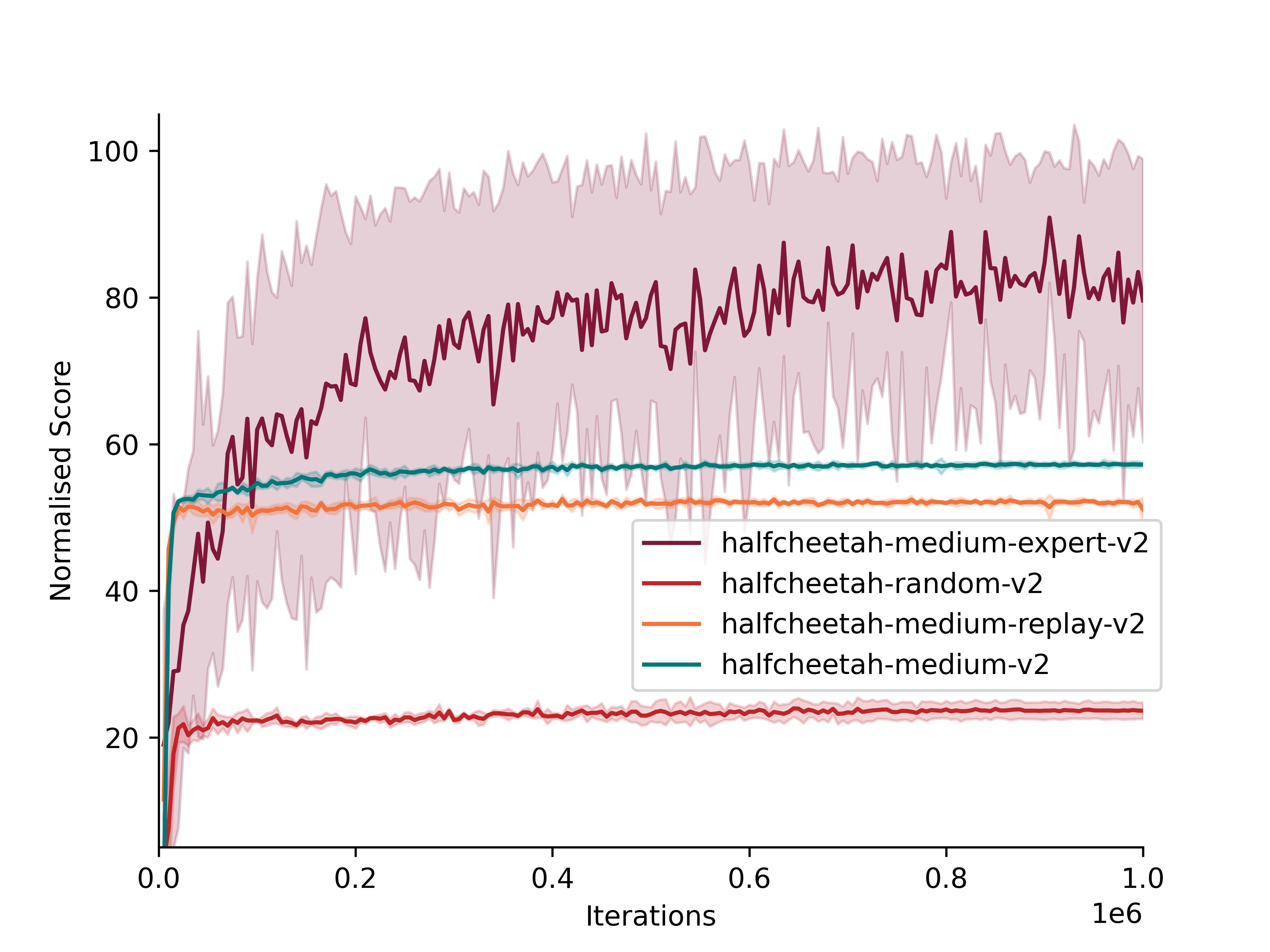}} 
    \subfigure[Antmaze-Umaze]{\includegraphics[width=0.45\textwidth]{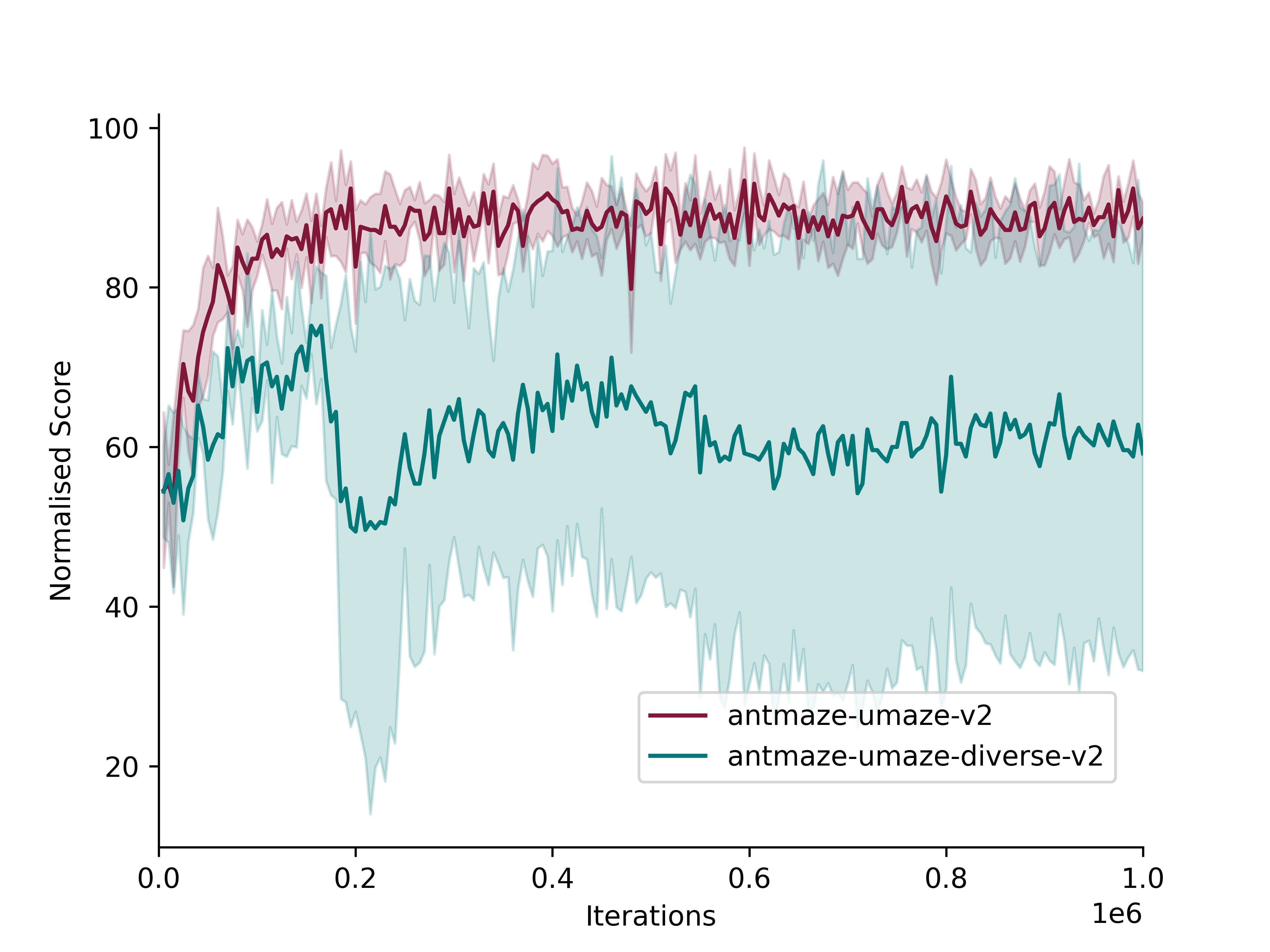}} 
    \subfigure[Antmaze-Medium]{\includegraphics[width=0.45\textwidth]{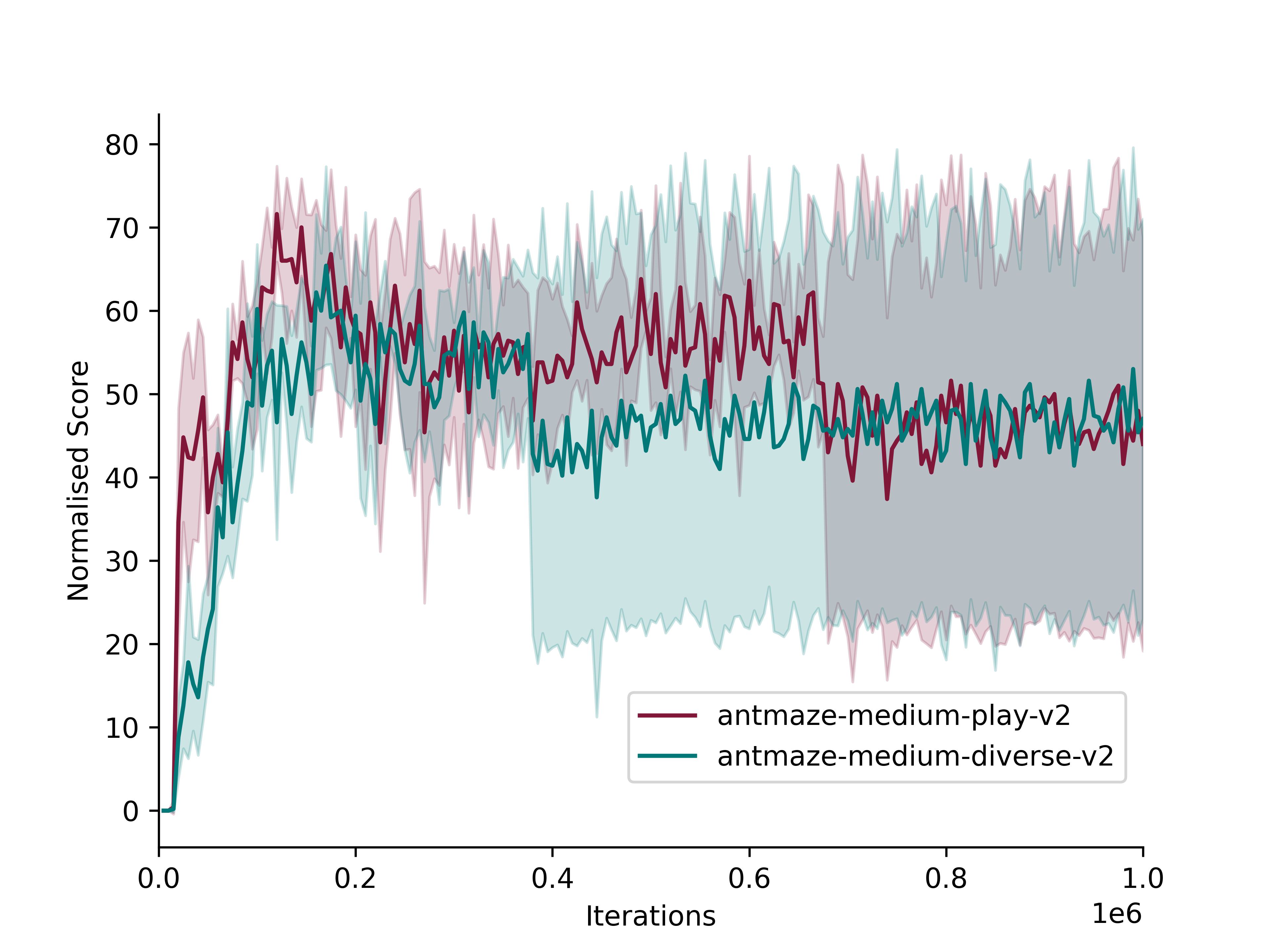}} 
    \subfigure[Antmaze-Large]{\includegraphics[width=0.45\textwidth]{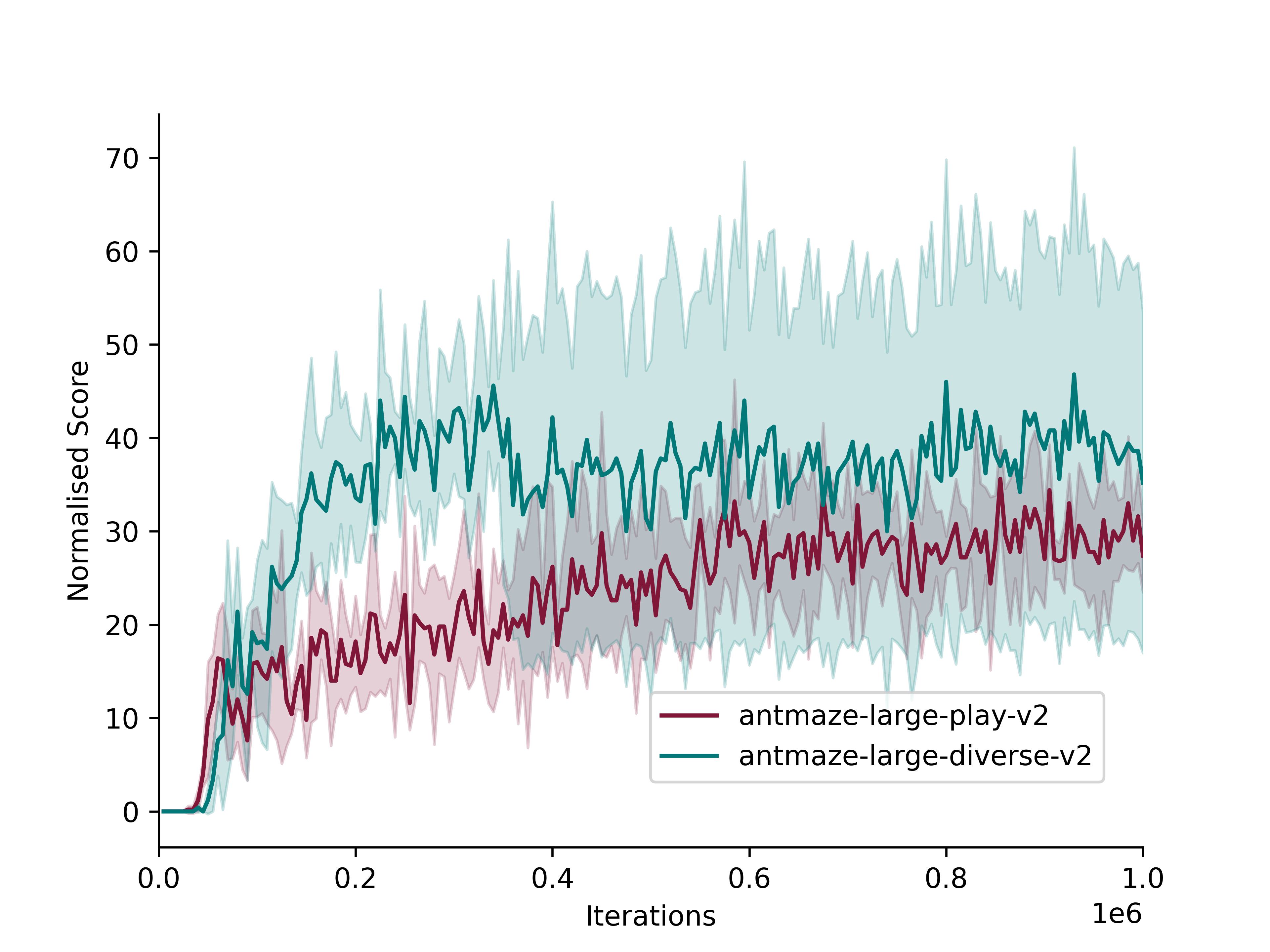}} 
    \caption{Learning curves for StaCQ across all datasets, grouped together by environment. The mean normalised score, over 10 evaluations (for locomotion tasks) or 100 evaluations (for Antmaze tasks), is recorded for every 5000 steps during training. The solid line is the mean score across 5 seeds and the shaded region is the standard deviation.}
    \label{Fig:LC_StaCQ}
\end{figure*}

\newpage

\section{Multi-step reachability: potential extensions and an illustrative example} \label{Sect:2stepReachMaze}

The SCQL framework leverages one-step state reachability. A natural and promising direction for future work involves extending this concept to \textit{multi-step reachability}. The core idea is to use a learned forward dynamics model, $f_{\omega_1}$, to simulate $k$-step trajectories $(s_0, a_0, \dots, a_{k-1}, s_k)$ starting from a state $s_0$. The crucial constraint remains that the trajectory must \textit{terminate} at a state $s_k$ that belongs to the original dataset $\mathcal{D}$. This allows the agent to potentially bridge larger gaps in the dataset by traversing through intermediate states ($s_1, \dots, s_{k-1}$) that might be out-of-distribution, while still anchoring the final value estimate to reliable in-distribution data ($s_k \in \mathcal{D}$).

\paragraph{The two-step case} Let us examine the specific case of two-step reachability. We define the set of 2-step reachable states ending in the dataset $\mathcal{D}$ from a state $s$ as:
\begin{equation}
    \mathcal{SR}^{(2)}_{\mathcal{M},\mathcal{D}}(s) = \{ s_2 \in \mathcal{D} \mid \exists a_0, a_1 \text{ s.t. } p(s_1|s,a_0)=1 \text{ and } p(s_2|s_1,a_1)=1 \}.
\end{equation}
Here, the intermediate state $s_1$ does not need to be in $\mathcal{D}$. Practically, reachability would be estimated using the learned model $f_{\omega_1}$, denoted $\widehat{\mathcal{SR}}^{(2)}_{\mathcal{M},\mathcal{D}}(s)$.

To utilise this, a modified Q-learning update could estimate the value $Q(s,s')$ based on the best 2-step reachable state from $s'$. A conceptual sketch, analogous to Eq. \eqref{Eq:SCQL}, might look like:
\begin{equation}\label{Eq:SCQL_twostep_corrected}
    Q(s,s') \leftarrow (1-\alpha) Q(s,s') + \alpha \Bigl[ r(s,s') +
    \gamma \max_{\substack{s''' \in \mathcal{D} \\ \cap \\ s''' \in \widehat{\mathcal{SR}}^{(2)}_{\mathcal{M},\mathcal{D}}(s')}} V^{(2)}(s', s''') \Bigr].
\end{equation}
In this sketch, $V^{(2)}(s', s''')$ represents the estimated value of the optimal 2-step path initiating from $s'$ and terminating at $s''' \in \mathcal{D}$. Accurately estimating $V^{(2)}$ using the learned model $f_{\omega_1}$ (involving rewards for the first step $s' \to s''$ and the value/reward for the second step $s'' \to s'''$) is a key challenge, requiring careful handling of model predictions and potential accumulated errors.

Extracting an optimal policy $\pi^*(s)$ based on $k$-step reachability is also more complex than the 1-step case. Conceptually, the goal is to select the first action $a_0$ that initiates the sequence leading to the highest-value $k$-step reachable state $s_k^* \in \mathcal{D}$:
\begin{enumerate}[noitemsep, topsep=0pt, leftmargin=*]
    \item Find the best $k$-step endpoint: $s_k^* = \argmax_{s_k \in \widehat{\mathcal{SR}}^{(k)}_{\mathcal{M},\mathcal{D}}(s)} V^{(k)}(s, s_k)$.
    \item Determine the first action $a_0^*$ on the optimal path from $s$ to $s_k^*$.
\end{enumerate}
Step 2 typically involves some form of planning (e.g., sampling action sequences with the model) or learning a policy specifically optimised for this multi-step objective.


\paragraph{Illustration: maze two-step reachability} The potential benefit of multi-step reachability is illustrated in the simple maze environment shown in Figure \ref{Fig:Maze_TwoStep}. The dataset provided (Figure \ref{Fig:Maze_TwoStep}a) is sparse, preventing the standard one-step SCQL from finding a path to the goal from all dataset states (Figure \ref{Fig:Maze_TwoStep}b). However, by considering two-step reachability ($k=2$), the agent can plan through an intermediate state that may not be in the dataset (indicated by dotted arrows). Assuming perfect knowledge of the dynamics for this illustrative example, a hypothetical two-step state-constrained policy can successfully connect all starting points to the goal region (Figure \ref{Fig:Maze_TwoStep}c), demonstrating improved trajectory stitching capabilities.

\begin{figure*}[!tb]
    \centering %
    \subfigure[Maze and Dataset]{\includegraphics[width=0.32\textwidth]{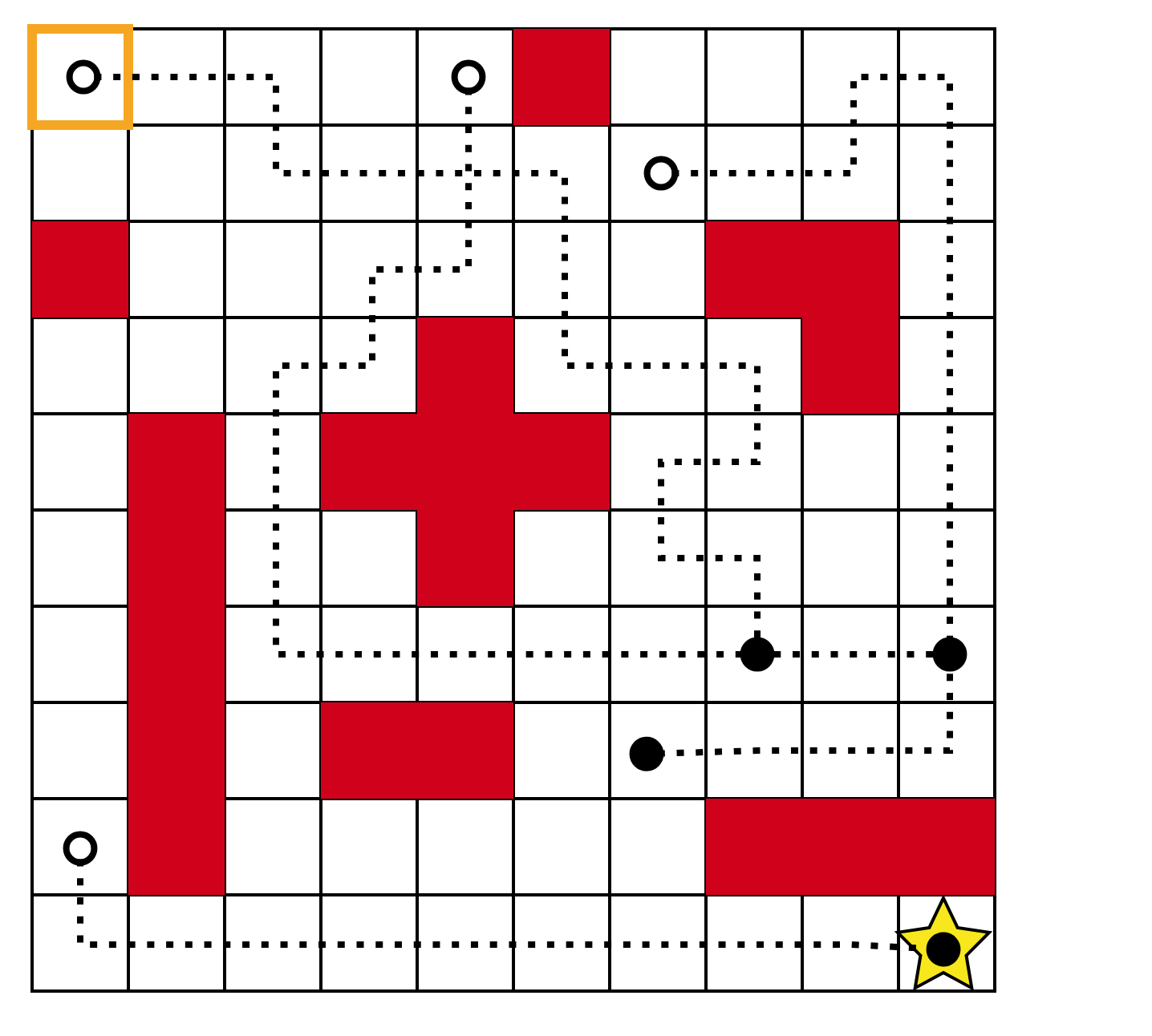}}
    \subfigure[SCQL: 1-step reachable policy]{\includegraphics[width=0.32\textwidth]{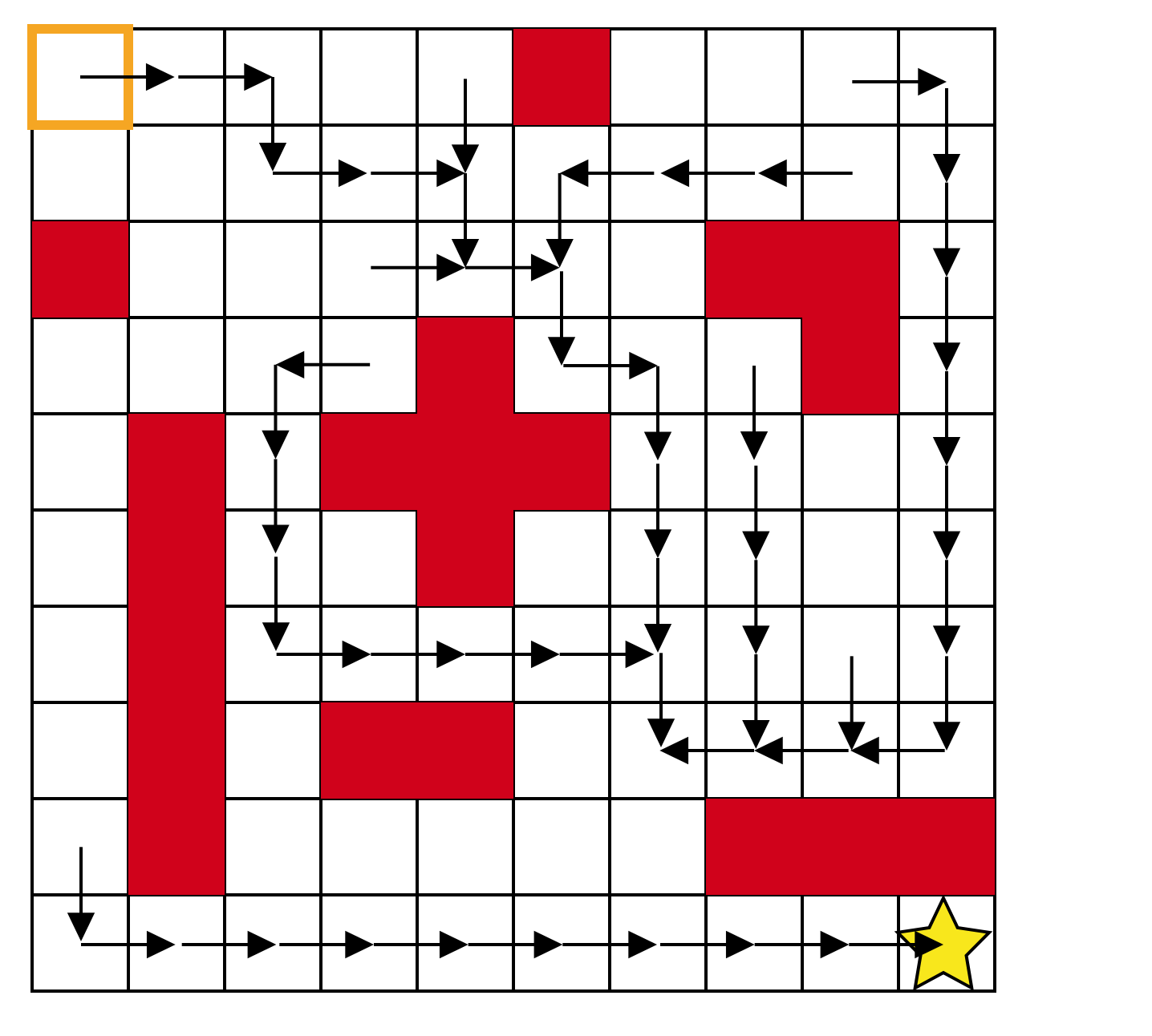}}
    \subfigure[SCQL: 2-step reachable policy]{\includegraphics[width=0.32\textwidth]{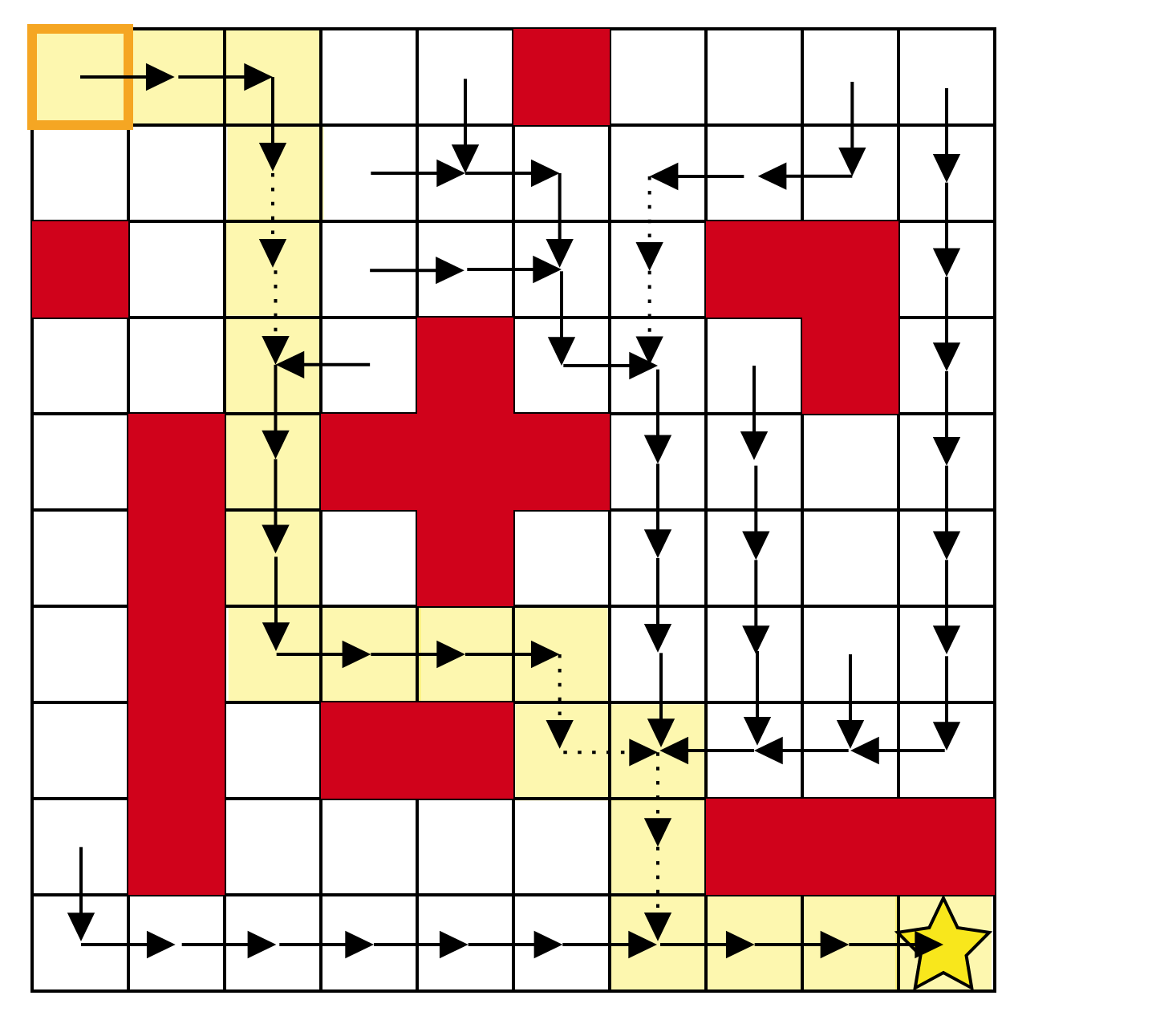}}
    \caption{ Comparison of one-step and two-step state reachability SCQL methods on a simple maze environment. (a) The maze is a 10 by 10 grid where the coordinate values $(x,y)$ represent the state. The high reward region is represented by a star and maze walls are represented by red grid squares. The dataset is composed of 4 trajectories represented by the dotted lines where the white circle is the starting state and the black circle is the final state. (b) The final policy when applying one-step reachable SCQL to the dataset. (c) The final policy when applying two-step reachable SCQL to the dataset, where dotted arrows represent transitions through a state that is not in the dataset.}
    \label{Fig:Maze_TwoStep} 
\end{figure*}


\paragraph{Potential challenges}

While promising, extending state-constrained RL to multiple steps also introduces significant practical challenges, primarily stemming from reliance on the learned dynamics model $f_{\omega_1}$. Key challenges and potential research directions include:

\begin{itemize}
    \item \textbf{Model error accumulation:} Errors in $f_{\omega_1}$ compound over longer rollouts ($k>1$), leading to inaccurate state predictions and potentially unreliable value estimates for the $k$-step paths.
    \item \textbf{Handling uncertainty:} Effectively leveraging multi-step rollouts requires managing this model uncertainty. Potential approaches include incorporating pessimism into value estimates based on rollout uncertainty \citep{yu2020mopo, rigter2022rambo}, using model ensembles to quantify confidence, or dynamically adapting the rollout horizon $k$ based on model reliability.
    
    \item \textbf{Computational cost:} Finding and evaluating all $k$-step reachable paths terminating in $\mathcal{D}$ can be computationally demanding. Efficient search or sampling techniques would be crucial. Graph-based search algorithms operating on the state space defined by the learned model could offer an alternative to simple rollouts for finding paths back to $\mathcal{D}$.
    
    \item \textbf{Algorithmic integration:} Developing robust Q-learning or policy optimization methods that correctly incorporate these multi-step, state-constrained value estimates is non-trivial, especially when considering the complexities of planning or differentiating through model rollouts.
    
\end{itemize}

Exploring these multi-step extensions involves balancing the benefit of connecting more distant states against the cost and unreliability of longer model-based rollouts. This intersection of model-based planning and offline constraints presents a rich area for future investigation within the state-constrained RL paradigm.

\end{document}